\theoremstyle{plain}
\newtheorem{theorem}{Theorem}[section]
\newtheorem{lemma}[theorem]{Lemma}
\theoremstyle{definition}
\theoremstyle{remark}
\newcommand{\homo} {{\mathrm{H}}}
\newcommand{\myparagraph}[1]{\textbf{#1}}
\title{Neural Approximation of Graph Topological Features}
\author{Zuoyu Yan \\
Wangxuan Institute of Computer Technology \\
Peking University \\
\texttt{yanzuoyu3@pku.edu.cn}
\And 
Tengfei Ma \\
IBM T. J. Watson Research Center \\
\texttt{tengfei.ma1@ibm.com}
\AND 
Liangcai Gao \\
Wangxuan Institute of Computer Technology \\
Peking University \\
\texttt{glc@pku.edu.cn}
\And
Zhi Tang \\
Wangxuan Institute of Computer Technology \\
Peking University \\
\texttt{tangzhi@pku.edu.cn}
\AND
Yusu Wang \\
Hal{\i}c{\i}o\u{g}lu Data Science Institute \\
University of California \\
\texttt{yusuwang@ucsd.edu}
\And
Chao Chen\thanks{Correspondence to Chao Chen, Yusu Wang, and Liangcai Gao} \\
Department of Biomedical Informatics \\
Stony Brook University \\
\texttt{chao.chen.1@stonybrook.edu}
}
\begin{document}

\maketitle

\begin{abstract}
 Topological features based on persistent homology can capture high-order structural information which can then be used to 
augment graph neural network methods. However, computing extended persistent homology summaries remains slow for large and dense graphs and can be a serious bottleneck for the learning pipeline. Inspired by recent success in neural algorithmic reasoning, we propose a novel graph neural network to estimate extended persistence diagrams (EPDs) on graphs efficiently. 
Our model is built on algorithmic insights, and benefits from better supervision and closer alignment with the EPD computation algorithm. 
We validate our method with convincing empirical results on approximating EPDs and downstream graph representation learning tasks. Our method is also efficient; on large and dense graphs, we accelerate the computation by nearly 100 times. 

\end{abstract}

\section{Introduction}
\label{sec:intro}
Graph neural networks (GNNs) have been widely used in various domains with graph-structured data~\cite{wu2020comprehensive, ma2021deep,kipf2016semi,velivckovic2018graph, chami2019hyperbolic}.
Much effort has been made to understand and to improve graph representation power~\cite{xu2018powerful,morris2019weisfeiler, bodnar2021weisfeiler, maron2019provably}.
An intuitive solution is to explicitly inject high order information, such as graph topological/structural information, into the GNN models~\cite{you2019position, li2020distance}. To this end, persistent homology ~\cite{edelsbrunner2000topological,edelsbrunner2010computational}, which captures topological structures (e.g., connected components and loops) and encodes them in a summary called \emph{persistence
diagram (PD)}, have attracted the attention of researchers. 
Indeed, persistence has already been injected to machine learning pipelines for various graph learning tasks~\cite{zhao2019learning,zhao2020persistence,hofer2020graph,carriere2020perslay,chen2021topological,yan2021link}.
In particular, it has been found helpful to use the so-called \emph{extended persistence diagrams (EPDs)} \cite{cohen2009extending}, which contain richer information than the standard PDs.


\begin{figure*}[btp]
	\centering
	\subfigure[]{
		\begin{minipage}[t]{0.23\linewidth}
			\centering
			\includegraphics[width=\columnwidth]{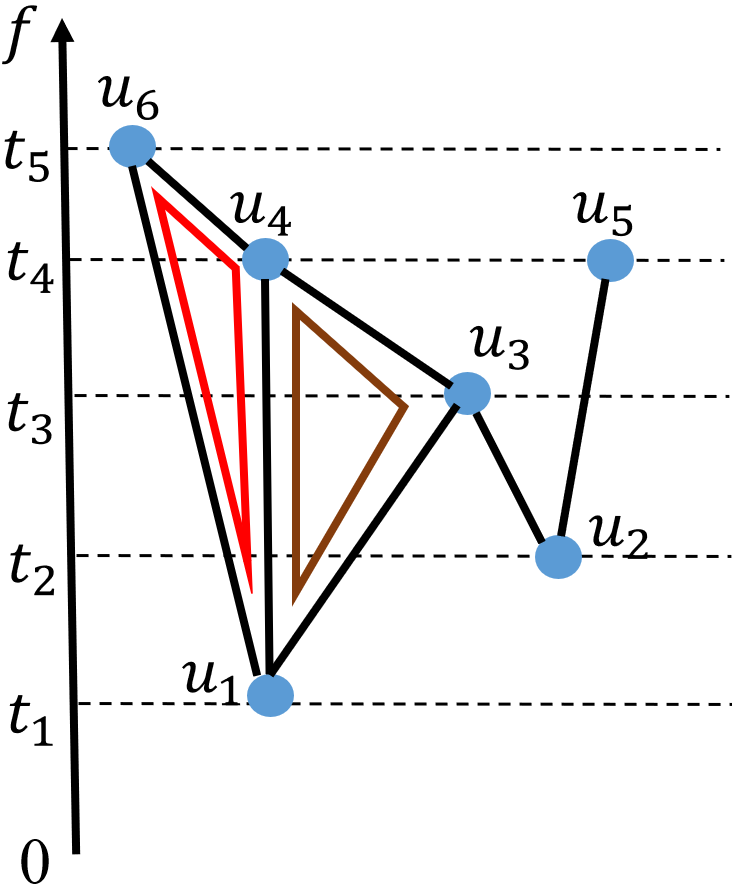}
		\end{minipage}
	}%
	\subfigure[]{
		\begin{minipage}[t]{0.31\linewidth}
			\centering
			\includegraphics[width=\columnwidth]{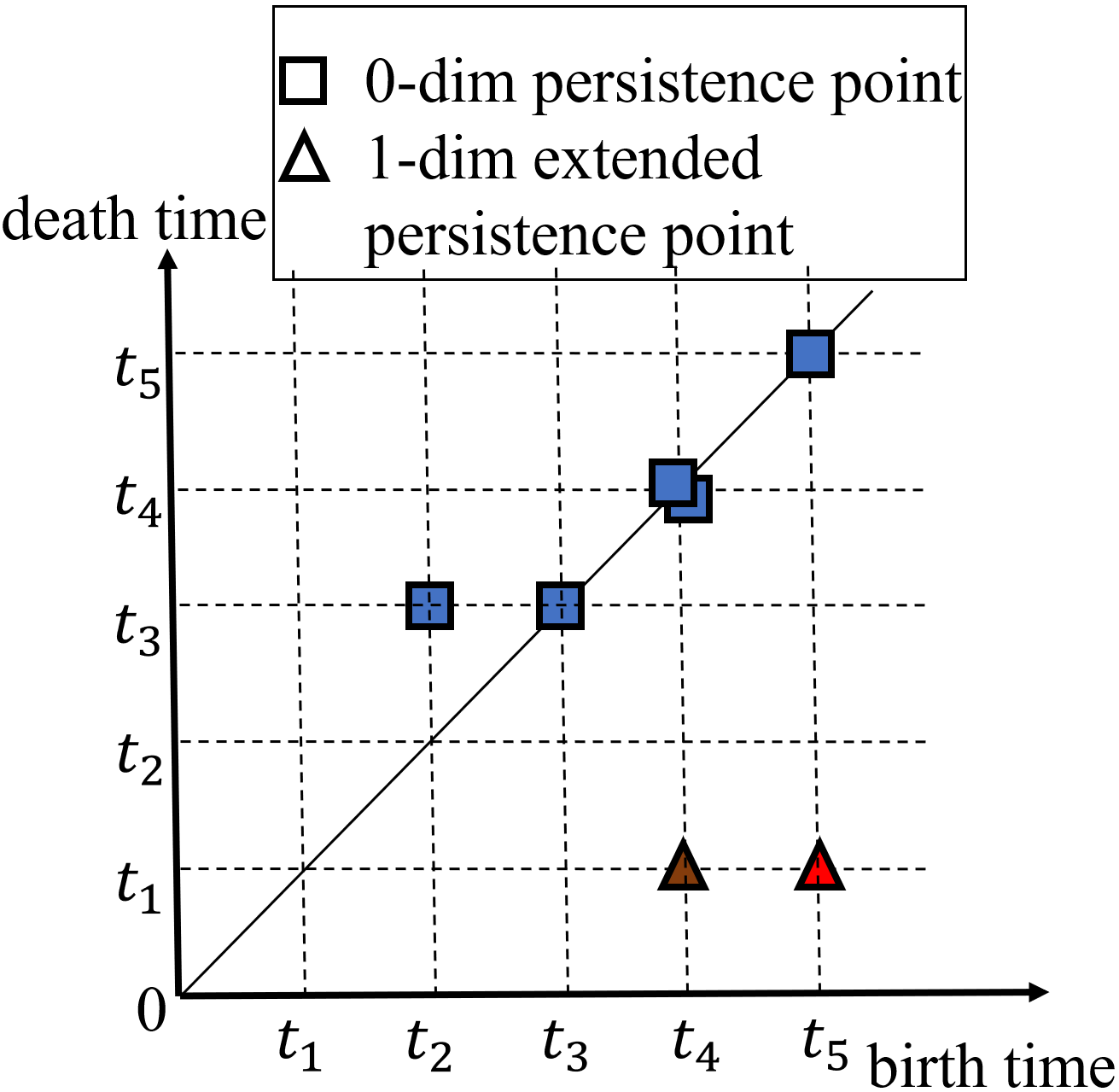}
		\end{minipage}%
	}%
	\subfigure[]{
		\begin{minipage}[t]{0.23\linewidth}
			\centering
			\includegraphics[width=\columnwidth]{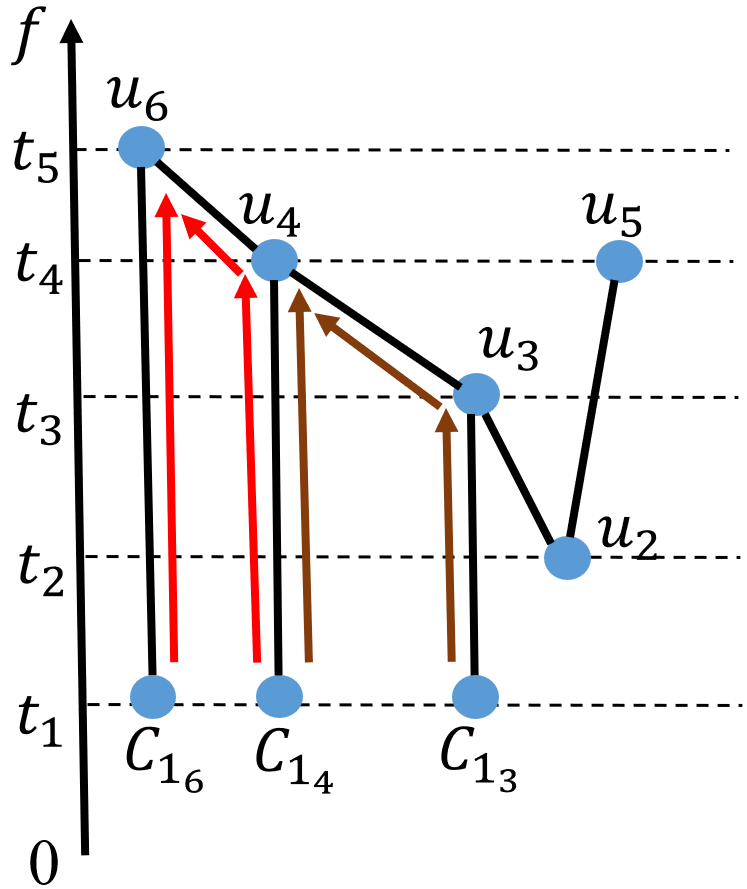}
		\end{minipage}%
	}%
	\subfigure[]{
		\begin{minipage}[t]{0.23\linewidth}
			\centering
			\includegraphics[width=\columnwidth]{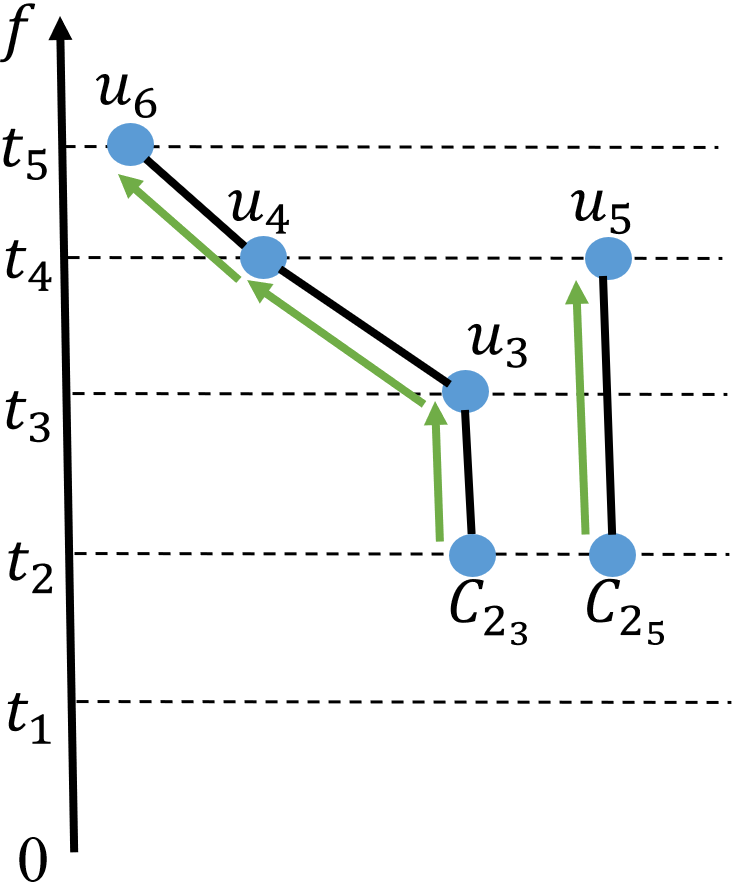}
		\end{minipage}
	}%
	\centering
	\vspace{-.1in}
	\caption{An explanation of extended persistent homology and its computation. The height-function-based filtration is only for illustration purposes. (a) The input graph is plotted with a given filter function. 
	(b) the extended persistence diagram of (a). Commonly speaking, the persistence points on the diagonal (uncritical points) should not be plotted. We plot these points for a clearer illustration. 
	(c) and (d) are examples of finding the loops in the input graph. 
	}
	\vspace{-.15in}
	\label{fig:PD}
	\vspace{-.15in}
\end{figure*}

Despite the usefulness
of PDs and EPDs, their computation remains a bottleneck in graph learning. In situations such as node classification~\cite{zhao2020persistence} or link prediction~\cite{yan2021link}, one has to compute EPDs on vicinity graphs (local subgraph motifs) generated around all the nodes or all possible edges in the input graph. This can be computationally prohibitive 
for large and dense graphs.
Take the Amazon Computers dataset~\cite{shchur2018pitfalls} as an example. To compute EPDs on vicinity graphs take several seconds on average, and there are 13381 nodes. So to compute all EPDs with a single CPU can take up to a day. This is not surprising as, while theoretically EPD for graphs can be computed in $O(n\log n)$ time \cite{agarwal2006extreme}, that algorithm has not been implemented, and practical algorithms for computing PD take quadratic time in worst case~\cite{yan2021link}.

These computational difficulties raise the question: \emph{can we approximate the expensive computation of EPDs using an efficient learning-based approach?}
This is a challenging question due to 
the complex mathematical machinery behind the original algorithm. First, the algorithm involves a reduction algorithm of the graph incidence matrix. Each step of the algorithm is a modulo-2 addition of columns that can involve edges and nodes far apart. Such algorithm can 
be hard to be approximated by a direct application of the black-box deep neural networks. 

The second challenge comes from the supervision. The output EPD is a point set with an unknown cardinality. The distance between EPDs, called the \emph{Wasserstein distance}~\cite{cohen2007stability,cohen2010lipschitz}, involves a complex point matching algorithm. It is 
nontrivial to design a deep neural network with variable output and to support supervision via such Wassserstein distance. Previous attempts~\cite{som2020pi,montufar2020can} directly use black-box neural networks to generate fixed-length vectorization of the PDs/EPDs and use mean squared error or cross-entropy loss for supervision. The compromise in supervision and the lack of control make it hard to achieve high-quality approximation of PDs/EPDs.

In this paper, we propose a novel learning approach to approximate EPDs on graphs. Unlike previous attempts, we address the aforementioned challenges through a carefully designed learning framework guided by several insights into the EPD computation algorithm.

In terms of model output and supervision, we observe that the computation of EPDs can be treated as an edge-wise prediction instead of a whole-graph prediction. 
Each edge in the graph is paired with another graph element (either vertex or edge), and the function values of the pair are the coordinates of a persistence point in the EPD. This observation allows us to compute EPDs by predicting the paired element for every edge of the graph. The Wasserstein distance can be naturally decomposed into supervision loss for each edge. This element-wise supervision can significantly improve learning efficiency compared with previous solutions, which treat PDs/EPDs as a whole-graph representation and have to use whole-graph representation pooling. 

Another concern is whether and how a deep neural network can approximate the sophisticated EPD algorithm. To this end, we redesign the algorithm so that it is better aligned with algorithms that are known to be learnable by neural networks. Recall we observe that computing EPDs can be decomposed into finding pairing for each edge. We show that the decomposition is not only at the output level, but also at the algorithm level. The complex standard EPD computation algorithm can indeed be decomposed into independent pairing problems, each of which can be solved exactly using a classic \emph{Union-Find algorithm}~\cite{cormen2009introduction}. 
To this end, we draw inspiration from recent observations that neural networks can imitate certain categories of sequential algorithms on graphs~\cite{velivckovic2019neural, xhonneux2021transfer}. We propose a carefully designed graph neural network with specific message passing and aggregation mechanism to imitate the Union-Find algorithm. 

Decomposing the algorithm into Union-Find subroutines and approximating them with a customized GNN provide better alignment between our neural network and the EPD algorithm. A better alignment 
can lead to better performance \cite{xu2020can}. 
Empirically, we validate our method by quantifying its approximation quality of the EPDs.  
On two downstream graph learning tasks, node classification and link prediction, we also show that our neural approximations are as effective as the original EPDs. 
Meanwhile, on large and dense graphs, our method is much faster than direct computation. In other words, the approximated EPDs do not lose accuracy and learning power, but can be computed much more efficiently.
Finally, we observe that our model 
can be potentially transferred to unseen graphs, perhaps due to the close imitation of the Union-Find subroutine. 
This is encouraging as we may generalize topological computation to various challenging real-world graphs without much additional 
effort.

In summary, we propose an effective learning approach to approximate EPDs with better supervision and better transparency. The technical contributions are as follows. 
\begin{itemize}[topsep=0pt, partopsep=0pt,itemsep=4pt,parsep=0pt]
    \item We reformulate the EPD computation as an edge-wise prediction problem, allowing better supervision and more efficient representation learning.
    We show that the EPD computation can be decomposed into independent pairing problems, each of which can be solved by the Union-Find algorithm. 
    \item Inspired by recent neural algorithm approximation works~\cite{velivckovic2019neural, xhonneux2021transfer}, we design a novel graph neural network architecture to learn the Union-Find algorithm. The closer algorithmic alignment ensures high approximation quality and transferability.
\end{itemize}

\section{Background: Extended Persistent Homology}
\label{sec:pre}

We briefly introduce extended persistent homology and refer the readers to~\cite{cohen2009extending, edelsbrunner2010computational} for more details. 

\textbf{Ordinary Persistent Homology.} 
Persistent homology captures 0-dimensional (connected components), 1-dimensional (loops) topological structures, as well as high-dimensional analogs, and measures their saliency via a scalar function called \emph{filter function}. Here we will only describe it for the graph setting. Given a input graph $G = (V,E)$, with node set $V$ and edge set $E$, we call all the nodes and edges \emph{simplices}. Denote by $X=V\cup E$ the set of all simplices. We define a filter function on all simpices, $f:X\rightarrow \mathbb{R}$. 
In the typical sublevel-set setting, $f$ is induced by a node-valued function (e.g., node degrees), and further defined on edges as $f(uv) = max(f(u), f(v))$.  

Denote by $X_a$ the \emph{sublevel set of $X$}, consisting of simplices whose filter function values $\leq a$, \mbox{$X_a = \{x\in X|f(x) \leq a\}$}. 
As the threshold value $a$ increases from $-\infty$ to $\infty$, we obtain a sequence of growing spaces, called an \emph{ascending filtration} of $X$: $\emptyset = X_{-\infty} \subset ... \subset X_{\infty} = X.$ As $X_{a}$ increases from $\emptyset$ to $X$, new topological structures gradually appear (born) and disappear (die). For instance, the blue square persistence point at $(t_2, t_3)$ in Figure~\ref{fig:PD} (b) indicates that the connected component $u_2$ appears at $X_{t_2}$ and is merged with the whole connected component at $X_{t_3}$.

Applying the homology functor to the filtration, we can more precisely quantify the birth and death of topological features (as captured by homology groups) throughout the filtration, and the output is the so-called \emph{persistence diagram (PD)}, which is a planar multiset of points, each of which $(b, d)$ corresponds to the birth and death time of some homological feature (i.e., components, loops, and their higher dimensional analogs).  The lifetime $|d - b|$ is called the \emph{persistence} of this feature and intuitively measures its importance w.r.t.~the input filtration. 

\textbf{Extended Persistent Homology.}  In the ordinary persistent homology, topology of the domain (e.g., the graph) will be created at some time (has a birth time), but never dies (i.e., with death time being equal to $+\infty$). We call such topological features \emph{essential features}. 
In the context of graphs, the importance of 
1D essential features, corresponding to independent loops, are not captured via the ordinary persistence. 
To this end, an \emph{extended persistence module} is introduced in \cite{cohen2009extending}: $\emptyset = \homo(X_{-\infty}) \to \cdots \homo(X_a) \to \cdots \homo(X) = \homo(X, X^{\infty}) \to \cdots \to \homo(X, X^a) \to \cdots \to \homo(X, X^{-\infty})$, where \mbox{$X^a = \{x\in X|f(x) \geq a\}$} is a \emph{superlevel set} of $X$ at value $a$. We say that the second part $ \homo(X, X^{\infty}) \to \cdots \to \homo(X, X^a) \to \cdots \to \homo(X, X^{-\infty})$ is induced by a \emph{descending filtration}. If we inspect the persistence diagram induced by this extended sequence, as $\homo(X, X^{-\infty})$ is trivial, all the loop features created will also be killed in the end, and thus captured by persistence points whose birth happens in the ascending filtration and death happens in the descending filtration. In what follows, we abuse the notation slightly and use \emph{1D EPD} to refer to only such persistence points (i.e., born in ascending portion and death in descending portion) in the persistence diagram induced by the extended module\footnote{We  note that in standard terminology, extended persistence diagram will also contain persistent points born and destroyed both in the descending sequence.}. We use 0D PD to refer to the standard ordinary 0D persistence diagram induced by the ascending sequence. Our goal is to compute/approximate the union of 0D PD and 1D EPD. 

Specifically, in the graph setting, at the end of the ascending filtration, some edges, which are the so-called negative edges (as they kill homological features), are paired with the vertices. These correspond to points in the 0D PD, capturing the birth and death of connected components in the ascending filtration. Those unpaired edges, called \emph{positive edges}, will create independent loops (1D homology for graphs) and remain unpaired after the ascending filtration. The number of such unpaired edges equals to the 1st Betti number $\beta_1$ (rank of the 1st homology group). These edges will then be paired in the descending part of the persistence module and their birth-depth times give rise to 1D EPD. An example is given in Figure~\ref{fig:PD}(b). 
Note that since our domain is a graph, $\beta_1$ equals the number of independent loops, which also equals to $\beta_1 = |E|-|V|+1$ for a connected graph. Hence we also say that 1D EPD captures the birth and death of independent loop features. The birth and death times of the loop feature correspond to the threshold value $a$'s when these events happen.
In general, the death time for such loop feature is smaller than the birth time. 
For example, the red triangle persistence point in Figure~\ref{fig:PD} (b) denotes that the red cycle in Figure~\ref{fig:PD} (a) appears at $X_{t_5}$ in the ascending filtration and appears again at $X^{t_1}$ in the descending filtration.

Finally, PDs live in an infinite-dimensional space equipped with an appropriate metric structure, such as the so-called $p$-th Wasserstein distance \cite{cohen2010lipschitz} or the bottleneck distance \cite{cohen2007stability}. They have been combined with various deep learning methods including kernel machines~\cite{reininghaus2015stable, kusano2016persistence, carriere2017sliced}, convolutional neural networks~\cite{hofer2017deep, hu2019topology, wang2020topogan, zheng2021topological}, transformers~\cite{zeng2021topological}, connectivity loss~\cite{chen2019topological, hofer2019connectivity}, and GNNs~\cite{zhao2020persistence,chen2021topological, yan2021link, zhao2019learning, hofer2020graph, carriere2020perslay}. During learning, there have been many works in the literature to vectorize persistence diagrams for downstream analysis. Among these works a popular choice is the persistence image~\cite{adams2017persistence}.

\section{Algorithm Revision: Decomposing EPD into Edge-Wise Paring Predictions}
\label{subsec:compute}

In this section, we provide algorithmic insights into how the expensive and complex computation of EPDs can be decomposed into pairing problems for edges. And each pairing problem can be solved exactly using a Union-Find algorithm. The benefit is two-folds. First, the decomposition makes it possible to train the neural network through edge-wise supervision.
This allows us to adopt the popular and effective edge-prediction GNN for the goal. Second, we observe the similarity between the Union-Find and sequential algorithms which 
are known to be imitable by neural networks. This gives us the opportunity to design a special graph neural network to imitate the algorithm accurately, and to approximate EPDs accurately. 

\textbf{Decompose the EPD Computation into Pairing Computations.} 
Recall that our goal is to compute the 0D PDs and 1D EPDs $PD_0$ and $PD_1$. The reason for not estimating 0D EPDs (or not including the global max/min pair that corresponds to the whole connected component) is that (1) the global max/min value is easy to obtain, and does not need an extra prediction; (2) in our setting, the global max/min pair will not be paired with any edge in the ascending filtration. In the later section, the estimation of EPDs denote the estimation of $PD_0$ and $PD_1$. 

We observe that on these diagrams, each point corresponds to a unique pairing of graph elements (vertex-edge pair for $PD_0$, edge-edge pair for $PD_1$). Each pair of elements are essentially the ``creator'' and ``destroyer'' of the corresponding topological feature during the filtration. And their filtration values are the birth and death times of the topological feature. For example, the persistence point located at $(t_2, t_3)$ in Figure~\ref{fig:PD} (b) denotes that the edge $u_2u_3$ is paired with $u_2$. 
We consider the following "unique pairing" for all edges in the graph: Consider each edge in the ascending filtration: if the edge is a destroyer in the ascending filtration, it will be paired with a vertex. Otherwise, this edge $e$ is a creator in the ascending filtration and will be paired during the descending filtration with another edge $e'$.
We note that this is not in conflict with the fact that the PDs/EPDs are often sparse. Many pairings are local and only pair adjacent elements. They correspond to zero-persistence points living in the diagonal of the diagrams.


This pairing view gives us the opportunity to transform the computation of EPDs into a pairing prediction problem: for every edge in the graph, we predict its pairing element. This will be the foundation of our design of the GNN in Sec.~\ref{sec:model}. 
Meanwhile, we observe that \emph{the decomposition is not only at the output level}. The original algorithm of EPD, a sequential modulo-2 matrix reduction algorithm, can indeed be rewritten into a set of independent algorithm subroutines, each for the computation of one pairing. Each subroutine is a Union-Find algorithm. This new decomposed EPD algorithm has not been reported before, although the idea follows from existing work \cite{agarwal2006extreme}. For completeness, we will provide a proof of correctness of the algorithm. 

\begin{figure}
    \begin{minipage}{\linewidth}
	\begin{algorithm}[H]
	\begin{multicols}{2}
		\begin{algorithmic}[1]	
		\STATE {\bfseries Input:} graph $G = (V,E)$, filter function $f$.
		\STATE Initialise-Nodes($V, f$)
		\STATE $Q = \text{Sort-Queue}(V)$
		\WHILE{Q is not empty}
		\STATE $u = Q.\text{pop-min}()$
		\FOR{$v \in G.\text{neighbors}(u)$}
		\STATE Relax-Edge($u,v,f$)
		\ENDFOR
		\ENDWHILE
		\end{algorithmic}
		\caption{Sequential algorithm} 
		\vspace{-.15in}
		\label{alg:seq}
		\end{multicols}
	\end{algorithm}
	\end{minipage}
\vspace{-.15in}
\begin{minipage}{\linewidth}
\vspace{-.15in}
\begin{algorithm}[H]
    \begin{multicols}{2}
	\begin{algorithmic}[1]
	\STATE {\bfseries Input:} filter function $f$, input graph $G = (V,E)$
	\STATE $V, E = \text{sorted} (V, E, f)$
	\STATE $PD_0 = \text{Union-Find}(V, E, f)$, $PD_1 = \{\}$
	\FOR{$i \in V$}
	\STATE $C_i = \{C_{i_j} | (i, j) \in E, f(j) > f(i)\}$, $E_i = E$
    \FOR{ $C_{i_j} \in C_i$}
	\STATE $f(C_{i_j}) = f(i)$, $E_i = E_i - \{(i,j)\} + \{(C_{i_j}, j)\}$
	\ENDFOR
	\STATE $PD_1^i = \text{Union-Find-step}(V + C_i - \{i\}, E_i, f, C_i)$
	\STATE $PD_1 += PD_1^i$
	\ENDFOR
	\STATE {\bfseries Output:} $PD_0$, $PD_1$
	\end{algorithmic}
    \caption{Computation of EPD}
    \vspace{-.2in}
	\label{alg:ext_PD}
	\end{multicols}
	\vspace{-.15in}
    \end{algorithm}

\end{minipage}
\vspace{-.15in}
\begin{minipage}{\linewidth}
		\begin{algorithm}[H]
		\begin{multicols}{2}
		\begin{algorithmic}[1]
		\STATE {\bfseries Input:} $V$, $E$, $f$, $C_i$ 
		\STATE $PD_1^i = \{\}$
		\FOR{$v \in V$}
		\STATE $v.value = f(v)$, $v.root = v$ 
		\ENDFOR
		\STATE $Q = \text{Sort}(V)$, $Q = Q - \{v|f(v) < f(i)\}$, $G = \{Q, E_Q\}$, where $E_Q = E \cup Q^2$.
		\WHILE{Q is not empty}
		\STATE $u = Q.\text{pop-min}()$
		\FOR{$v \in G.\text{neighbors}(u)$}
		\STATE 
		\STATE $pu, pv = \text{Find-Root}(u), \text{Find-Root}(v)$
		\IF{$pu \neq pv$}
		\STATE $s = argmin(pu.value, pv.value)$
		\STATE $l = argmax(pu.value, pv.value)$
		\STATE $l.root = s$ 
		\IF{$pu \in C_i$ and $pv \in C_i$} 
		\STATE $PD_1^i + \{(u.value, l.value)\}$
		\ENDIF 
		\ENDIF
		\ENDFOR
		\ENDWHILE
		\STATE {\bfseries Function:} $\text{Find-Root} (u)$
		\STATE $pu = u$
		\WHILE {$pu \neq pu.root$}
		\STATE $pu.root = (pu.root).root$, $pu = pu.root$
		\ENDWHILE
		\STATE {\bfseries Return:} $pu$
		\end{algorithmic}
		\vspace{-.2in}
		\caption{Union-Find-step (Sequential)}
		\label{alg:UFs}
		\end{multicols}
		\vspace{-.15in}
		\end{algorithm}
		\end{minipage}
		\vspace{-.15in}
\end{figure}

\textbf{Description of Algorithm~\ref{alg:ext_PD}.} 
The pseudocode for 1D EPD computation is shown in Algorithm~\ref{alg:ext_PD}. We leave the algorithm for 0D PD to the supplementary material\footnote{The 0D algorithm needs a single run of Union-Find~\cite{edelsbrunner2010computational,dey2022computational}, and is very similar to Algorithm \ref{alg:UFs} which is a subroutine used by Algorithm~\ref{alg:ext_PD}.}.
For simplicity of presentation, we assume that all vertices have distinct function values $f: V \to \mathbb{R}$\footnote{We can add jitter to the original filter function. The output EPDs will only have minor changes~\cite{cohen2007stability}}. Therefore finding the persistence value equals to finding the pairing.
To compute the EPD, we traverse all nodes in the vertex set and find their extended persistence pairing. 
Combining the persistence pair from all nodes, we can obtain the final EPD. The algorithm complexity analysis is provided in the supplementary material. 

\textbf{Finding persistence pairing for nodes.} For node $u_i \in V$, we can call Algorithm \ref{alg:UFs} to identify the corresponding persistence pair. 
In particular, the algorithm first sorts the graph elements according to an input scalar function, then does the edge operation by finding the roots of the corresponding nodes and merging these nodes. 
See Figure~\ref{fig:PD}(c) for a simple illustration. For node $u_1$, there are three upper edges: $u_1u_3$, $u_1u_4$, and $u_1u_6$.
We put each such edge $u_iu_j$ in a different component $C_{i_j}$, -- we call this \emph{upper-edge splitting operation} -- and start to sweep the graph in increasing values starting at $f(u_i)$. Then, the first time any two such components merge will give rise to a new persistence point in the 1D EPD. For instance, $C_{1_4}$ and $C_{1_3}$ first merge at $u_4$, and this will give rise to the brown loop in Figure~\ref{fig:PD}(a) with $(t_4, t_1)$ as its persistence point. While in Figure~\ref{fig:PD} (d), the two connected components, $C_{2_3}$ and $C_{2_5}$ (originated from $u_2$) will not be united. Therefore, node $u_2$ will not lead to any persistence point in the EPD. 

\textbf{Correctness.} 
The idea behind Algorithm~\ref{alg:ext_PD} to compute the extended pairing for essential edges appears to be folklore. For completeness, we provide a proof of its correctness (stated in Theorem~\ref{the:1}). 
We provide a sketch of the proof here, leaving the complete proof to the supplementary material. 

\begin{theorem}
\label{the:1}
Algorithm~\ref{alg:ext_PD} outputs the same 1D EPDs as the standard EPD computation algorithm.
\end{theorem}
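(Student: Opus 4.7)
The plan is to reduce the claim to a death-vertex decomposition of the 1D EPD, and then verify that Algorithm~\ref{alg:ext_PD} implements that decomposition exactly. I assume distinct vertex values, as the paper does; ties are handled by the jitter-then-stability argument already cited in the paper.

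The central structural fact I would establish is the following. For each vertex $v \in V$, let $\deg^+(v)$ be the number of upper edges at $v$ (edges $(v,w)$ with $f(w) > f(v)$), and let $c_+(v)$ be the number of distinct connected components of the strict super-level subgraph $G^{>f(v)}$ that are incident to an upper edge of $v$. Then the number of 1D EPD pairs whose death coordinate equals $f(v)$ is exactly $\deg^+(v) - c_+(v)$, and the birth coordinates of those pairs are the filter values at which pairs of distinct upper branches of $v$ first become mutually connected as one sweeps upward from $f(v)$. The cleanest derivation proceeds through the descending portion $\mathrm{H}_1(X,X^{\infty}) \to \cdots \to \mathrm{H}_1(X,X^{-\infty})$ of the extended module in Section~\ref{sec:pre}: adding vertex $v$ together with its $\deg^+(v)$ upper edges to the pair $(X,X^{\geq f(v)})$ identifies $c_+(v)$ previously distinct components above $v$, creating precisely $\deg^+(v) - c_+(v)$ new independent 1-cycles that are born in the ascending sweep above $v$ and die at $f(v)$ in the descending sense.

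Given this fact, correctness of Algorithm~\ref{alg:ext_PD} follows by a vertex-by-vertex accounting argument. For each $u_i$, the upper-edge splitting creates $\deg^+(u_i)$ auxiliary components $C_{i_j}$, each with root value $f(u_i)$. Algorithm~\ref{alg:UFs} then processes the vertex set $V + C_i - \{u_i\}$ in increasing filter value starting at $f(u_i)$. Because $f(u_i)$ is the smallest value in the sweep, any Union--Find tree that contains some $C_{i_j}$ must have a root in $C_i$; consequently, the emission guard ``$pu \in C_i$ and $pv \in C_i$'' fires precisely at inter-branch merges, i.e., merges between two distinct auxiliary components originating at $u_i$. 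The total number of such fires equals $\deg^+(u_i) - c_+(u_i)$, because the inter-branch merges form a spanning forest on the $\deg^+(u_i)$ auxiliary components that terminates with exactly $c_+(u_i)$ final components. The emitted pair at merging vertex $u$ is $(f(u), f(u_i))$, matching the birth and death coordinates predicted by the structural fact. Summing over $i \in V$ produces every 1D EPD pair with the correct multiplicity, yielding the same output as the standard algorithm.

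The main obstacle is establishing the structural fact rigorously from the extended module rather than invoking it as folklore. A self-contained argument would (i) give a combinatorial basis of $\mathrm{H}_1(X,X^a)$ for graphs in terms of cycles modulo cuts, (ii) track each basis element through the descending module, and (iii) identify the critical value at which each class becomes trivial with the value $f(v)$ of the unique vertex $v$ whose addition (together with its upper edges) merges the two sides of the corresponding cut. A secondary subtlety is confirming that pairs are not double-counted across different $u_i$: each emitted pair $(f(u), f(u_i))$ determines $u_i$ as the unique minimum vertex on its associated fundamental cycle, so no other $u_{i'}$ can produce the same pair.
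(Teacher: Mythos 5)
Your algorithm-side accounting is sound: with distinct vertex values every Union--Find tree containing an auxiliary node $C_{i_j}$ is rooted at one, so the guarded emissions for source $u_i$ in Algorithm~\ref{alg:UFs} number exactly $\deg^+(u_i)-c_+(u_i)$ and record the branch-merge values; and your Euler-characteristic (equivalently, long-exact-sequence) computation does show that the true 1D EPD has exactly $\deg^+(v)-c_+(v)$ points with death coordinate $f(v)$, since $\mathrm{H}_1(X^a)\to \mathrm{H}_1(X)$ is injective for graphs. The genuine gap is the other half of your ``structural fact'': you never prove that the \emph{birth} coordinates of the points dying at $f(v)$ are the inter-branch merge values of the upward sweep from $v$. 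The counting argument only controls how many essential classes die at each descending value; it says nothing about which ascending births the extended module (or the reduction algorithm) pairs with the death at $f(v)$. That pairing is precisely the content of Theorem~\ref{the:1}, and your items (i)--(iii) are a plan rather than an argument --- in particular, step (iii) again identifies only the death value of each class, not the birth it is matched to. As it stands, you have shown that the predicted and true diagrams agree in the number of points on each horizontal line $d=f(v)$, which does not imply equality as multisets.

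For comparison, the paper closes exactly this hole with a different device: it defines the ``thinnest pair'' of an essential edge and proves that both algorithms realize it --- Algorithm~\ref{alg:ext_PD} emits the thinnest pair for every ascending positive edge (Lemma~\ref{lemma:1}), and the standard reduction (Algorithm~\ref{alg:MR}) does too, via the observation that an edge must already be paired once some cycle having it as lowest edge has fully appeared in the descending filtration (Lemmas~\ref{lemma:2_1} and~\ref{lemma:2}). To complete your route you would need an argument of comparable weight: for each $v$, exhibit a basis of $\mathrm{H}_1(X^{\geq f(v)})/\mathrm{H}_1(X^{>f(v)})$ by cycles through $v$ whose ascending birth values are exactly the merge values, and show (elder rule / greedy exchange) that the module's pairing cannot assign any other multiset of births to the deaths at $f(v)$. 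Your distinct-values reduction and the no-double-counting remark (deaths determine the source vertex) are fine and need no further work.
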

\emph{Proof sketch.} To compute the 1D EPDs, we simply need to find the pairing partner for all edges. Therefore, to prove that the two algorithms output the same 1D EPDs, we need to prove that the output pairing partners are the same (or share the same filter value). We prove this by showing that both the standard EPD computation algorithm and Algorithm~\ref{alg:ext_PD} find the ``thinnest pair", i.e., the paired saddle points are with the minimum distance in terms of filter value, for all edges.

\textbf{Neural Approximation of Union-Find.}
In the previous paragraph, we showed that the computation of 1D EPDs can be decomposed into the parallel execution of Union-Find algorithms, which share a similar sequential behavior. This gives us the opportunity to approximate these Union-Find algorithms well, and consequently approximate EPDs well.

Approximating algorithms with neural networks is a very active research direction~\cite{zaremba2014learning, kaiser2015neural, kurach2015neural, reed2015neural, santoro2018relational, yan2020neural}. Within the context of graph, GNNs have been proposed to approximate parallel algorithms (e.g., Breadth-First-Search) and sequential algorithms (e.g., Dijkstra)~\cite{velivckovic2019neural, velivckovic2020pointer,xhonneux2021transfer}. Particularly relevant to us is the success in approximating the category of sequential algorithms such as Dijkstra. These sequential algorithms, as generally defined in Algorithm~\ref{alg:seq}, sort graph elements (vertices and edges) according to certain function, and perform algorithmic operations according to the order. As described in previous paragraphs, the Union-Find algorithm also contains these steps, and can be expressed in a sequential-like form (Algorithm~\ref{alg:UFs}). Therefore we propose a framework to simulate the algorithm. 
\section{A Graph Neural Network for EPD Approximation}
\label{sec:model}
Previous section establishes the algorithm foundation by showing that we can decompose EPD computation into edge pairing prediction problems, each of which can be solved using a Union-Find algorithm.
Based on such algorithmic insights, we next introduce our neural network architecture to approximate the EPDs on graphs. Our main contributions are: (1) we transform the EPD computation into an edge-wise prediction problem, and solve it using a GNN framework, inspired by the GNN for link prediction; (2) we design a new backbone GNN model \textbf{PDGNN} to approximate the Union-Find algorithm, with specially designed pooling and message passing operations.


\subsection{EPD computation as a edge-wise prediction problem}
\vspace{-.1in}

\begin{figure}[btp]
	\centering
	\scalebox{0.7}{
	\includegraphics[width=\columnwidth]{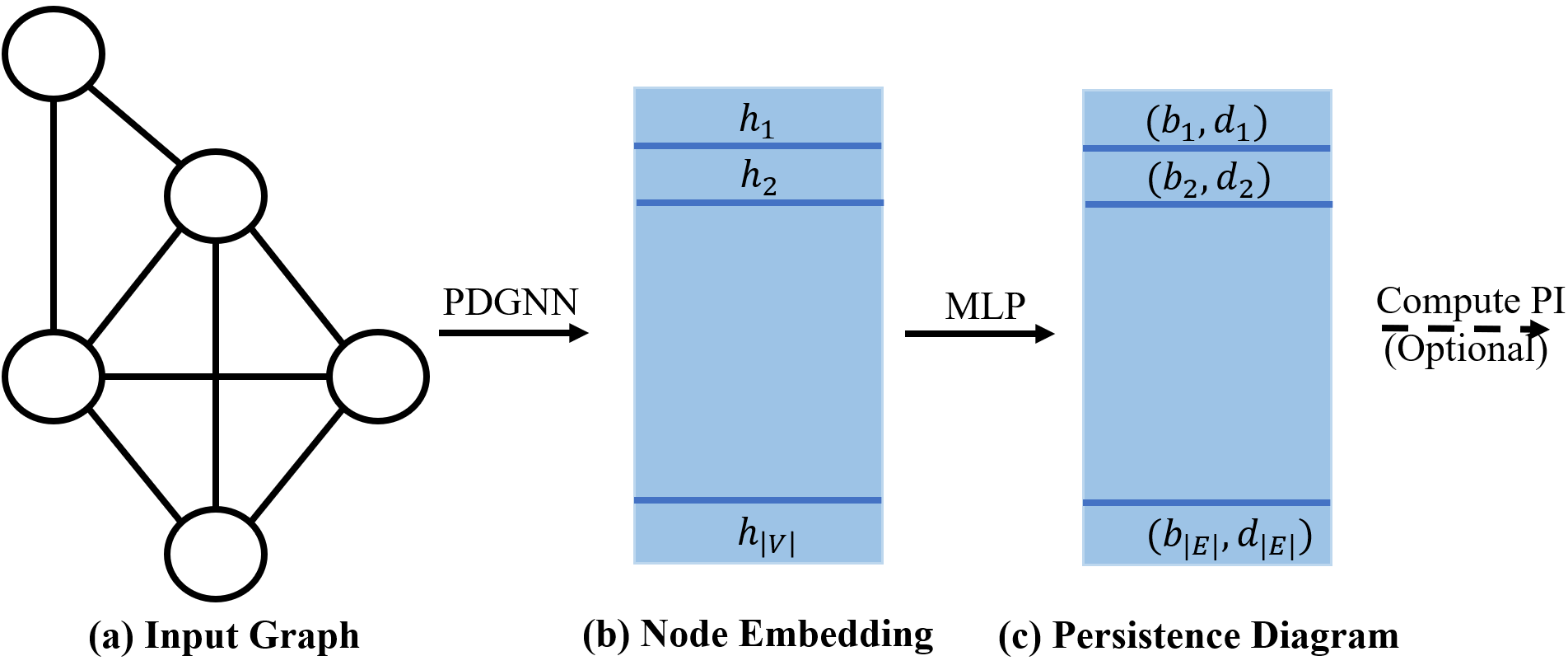}
	}
	\vspace{-.1in}
	\caption{The basic framework.}
	\label{fig:PDGNN}
	\vspace{-.15in}
\end{figure}

We have established that computing $PD_0$ and $PD_1$ can be reduced into finding the pairing partners for all edges.
We transfer the problem into an edge-wise prediction problem. We predict the persistence pairing for all edges. 
This is very similar to a standard link prediction problem \cite{chami2019hyperbolic, yan2021link}, in which one predicts for each node pair of interest whether it is a real edge of the graph or not. 

Inspired by standard link-prediction GNN architectures~\cite{chami2019hyperbolic, yan2021link}, we propose our model (see Figure~\ref{fig:PDGNN}) as follows. 
(1) For an input graph $G = (V,E)$ and a filter function $f$, we first obtain the initial filter value for all the nodes: $X = f(V) \in R^{|V|*1}$, and then use a specially designed GNN model which later we call PDGNN $\mathcal{G}$ 
to obtain the node embedding for all these vertices: $H = \mathcal{G}(X) \in R^{|V|*d_H}$. 
(2) Subsequently, a MLP (Multi-layer perceptron) $W$ is applied to the node embeddings to obtain a two dimensional output for each edge $(u,v)\in E$, corresponding to its persistence pairing. Formally, we use $PP_{uv} = W([h_u \bigoplus h_v]) \in R^2$ as the persistence pair. 
Here, $h_u$ and $h_v$ denote the node embedding for node $u$ and $v$, and $\bigoplus$ represents the concatenation of vectors. 

In Algorithm~\ref{alg:ext_PD}, the Union-Find-step should be implemented on all edges to obtain 1D EPDs. Hence ideally we would need a large GNN model with node features proportional to the graph size so as to simulate all these Union-find-steps in parallel simultaneously. However this would be expensive in practice.  On the other hand, there are many overlapping or similar computational steps between the Union-Find-step procedures on different vertices. Hence in practice, we only use bounded-size node features.

\subsection{PDGNN}
\vspace{-.1in}
In this section, we explain how to design the backbone GNN to approximate the Union-Find algorithm. Note the Union-Find is similar to known sequential algorithms but with a few exceptions. We design specific pooling and message passing operations to imitate these special changes. These design choices will be shown to be necessary in the experiment section.

Recall a typical GNN learns the node embedding via an iterative aggregation of local graph neighbors. Following~\cite{xu2018powerful}, we write the $k$-th iteration (the $k$-th GNN layer) as: 
\begin{equation}
\label{equa:GNN}
h_u^{k} = AGG^k(\{MSG^k(h_v^{k-1}), v \in N(u)\}, h_u^{k-1})
\end{equation}
where $h_u^k$ is the node features for node $u$ after $k$-th iterations, 
and $N(u)$ is the neighborhood of node $u$.
In our setting, $h_u^0 = x_u$ is initialized to be the filter value of node $u$. 
Different GNNs have different $MSG$ and $AGG$ functions, e.g., in GIN~\cite{xu2018powerful}, the message function $MSG$ is a MLP followed by an activation function, and the aggregation function $AGG$ is a sum aggregation function. 

We now describe our specially designed GNN, called \textbf{PDGNN} (Persistence Diagram Graph Neural Network). Compared with the Sequential algorithms (Algorithm \ref{alg:seq}) \cite{xhonneux2021transfer}, our Union-Find algorithm (Algorithm \ref{alg:UFs}) differs in: (1) the Find-Root algorithm which needs to return the minimum of the component, (2) additional edge operations such as upper-edge splitting. 
To handle these special algorithmic needs, our PDGNN modifies standard GNNs with the following modules. 

\textbf{A new aggregation due to the Find-Root function.} 
Finding the minimum intuitively suggests using a combination of several local min-aggregations. Considering that the sum aggregation can bring the best expressiveness to GNNs \cite{xu2018powerful}, we implement the root-finding process by a concatenation of sum aggregation and min aggregation as our aggregation function. To be specific: 
\begin{equation}
AGG^k(.) = SUM(.) \bigoplus MIN(.)
\end{equation}

\textbf{Improved edge operations.} 
As shown in~\cite{velivckovic2019neural,xhonneux2021transfer}, classic GNNs are not effective in ``executing'' 
Relax-Edge subroutines. Furthermore, in Algorithm \ref{alg:ext_PD}, we also need the upper-edge splitting operation for each vertex. In other words, the information of the separated components $C_{i_j}$ are formed by the information from both nodes $u_i$ and $u_j$. To this end, 
we use edge features and attention to provide bias using edges. Specifically, we propose the following message function in the $k$-th iteration: 
\begin{equation}
MSG^{k}(h_v^{k-1}) = \sigma^k[\alpha_{uv}^k(h_u^{k-1} \bigoplus h_v^{k-1})W^{k}]
\end{equation}
where $\sigma^k$ is an activation function, $W^k$ is a MLP module, and $\alpha_{uv}^k$ is the edge weight for $uv$. We adopt $PRELU$ as our activation function, and the edge weight proposed in~\cite{velivckovic2018graph} as our edge weight.

\textbf{Training PDGNN.} We use the 2-Wasserstein distance between the predicted diagram and the ground truth EPD as the loss function. Through optimal matching, the gradient is passed to each predicted persistence pair. Since we have established the one-to-one correspondence between pairs and edges, the gradient is then passed to the corresponding edge, and contributes to the representation learning. 


\begin{table*}
	\vspace{-0.15 in}
	\centering
	\caption{Approximation error on different vicinity graphs}
	\label{tab:app_nc} 
	\scalebox{0.8}{
		\begin{tabular}{l|cc|cc|cc|cc|cc}
			\hline\noalign{\smallskip}
			Dataset & \multicolumn{2}{c|}{Cora} & \multicolumn{2}{c|}{ Citeseer} & \multicolumn{2}{c|}{PubMed} & \multicolumn{2}{c|}{Photo} & \multicolumn{2}{c}{Computers}\\
			Evaluation & $W_2$ & PIE  &  $W_2$ & PIE &  $W_2$ & PIE &  $W_2$ & PIE&  $W_2$ & PIE\\
			\noalign{\smallskip}\hline\noalign{\smallskip}
			GIN\_PI &--- & 5.03e-1 &--- &2.17e-1&--- &4.08e-1 & ---&5.53 &---& 2.70\\
			GAT\_PI & ---&1.43e-1& ---&1.95e-1& ---&1.60 &--- &20.98 &---& 44.50\\
			\noalign{\smallskip}\hline\noalign{\smallskip}
			GAT &  0.655&2.46e-2 & 0.431 & 4.04e-2&0.697 &3.5e-1& 1.116&1.09& 1.145 & 2.21\\
			GAT (+MIN) & 0.579 & 1.53e-2  &0.344 & 1.02e-2 & 0.482 & 4.60e-2  & 0.820 & 1.35& 0.834&0.64 \\
			PDGNN (w$\slash$o ew) & 0.692 & 2.77e-2& 0.397&2.24e-2 & 0.666 & 9.01e-2& 2.375&6.47& 18.63 & 27.35\\
			\noalign{\smallskip}
			\hline
			\noalign{\smallskip}
			PDGNN & \textbf{0.241} & \textbf{4.75e-4} & \textbf{0.183} & \textbf{4.43e-4}& \textbf{0.256} & \textbf{8.95e-4} & \textbf{0.224} & \textbf{4.33e-3} & \textbf{0.220} & \textbf{6.20e-3}\\
			\noalign{\smallskip}
			\hline
			\noalign{\smallskip}
	\end{tabular}}
	\vspace{-0.1 in}
\end{table*}

\begin{table*}
	\vspace{-0.1 in}
	\centering
	\caption{Classification accuracy on various node classification benchmarks}
	\label{tab:NC}
	\scalebox{0.8}{
		\begin{tabular}{l|ccccc|cc}
			\hline\noalign{\smallskip}
			Method & Cora & Citeseer &  PubMed & Computers & Photo & CS & Physics \\
			\noalign{\smallskip}\hline\noalign{\smallskip}
			GCN & 81.5$\pm$0.5 & 70.9$\pm$0.5 & 79.0$\pm$0.3 & 82.6$\pm$2.4 & 91.2$\pm$1.2 & 91.1$\pm$0.5 & 92.8$\pm$1.0\\
			GAT & \textbf{83.0$\pm$0.7} & \textbf{72.5$\pm$0.7} & 79.0$\pm$0.3 & 78.0$\pm$19.0 & 85.1$\pm$20.3 & 90.5$\pm$0.6 & 92.5$\pm$0.9\\
			HGCN & 78.0$\pm$1.0 & 68.0$\pm$0.6 & 76.5$\pm$0.6 & 82.1$\pm$0.0 & 90.5$\pm$0.0 & 90.5 $\pm$ 0.0& 91.3$\pm$0.0 \\
			\noalign{\smallskip}
			\hline
			\noalign{\smallskip}
			PEGN (True Diagram)  & 82.7$\pm$0.4 & 71.9$\pm$0.5 & \textbf{79.4$\pm$0.7} & 86.6$\pm$0.6 & \textbf{92.7$\pm$0.4} & \textbf{93.3$\pm$0.3} & \textbf{94.3$\pm$0.1} \\	
			\noalign{\smallskip}
			\hline
			\noalign{\smallskip}
			PEGN (GIN\_PI) & 81.8$\pm$0.1 & 65.7$\pm$2.1 & 77.7$\pm$0.9 & 82.4$\pm$0.5  & 88.3$\pm$0.7 & 92.6$\pm$0.3 & 93.7$\pm$0.5  \\
			PEGN (PDGNN) & 82.0$\pm$0.5 & 70.8$\pm$0.5 & 78.7$\pm$0.6 & \textbf{86.7$\pm$0.9}  & 92.2$\pm$0.2  & 93.2$\pm$0.2 & 94.2$\pm$0.2  \\
			\noalign{\smallskip}\hline
	\end{tabular}}
\end{table*}

\begin{table*}
	\vspace{-0.15 in}
	\centering
	\caption{AUC-ROC score on various link prediction benchmarks}
	\label{tab:LP} 
	\scalebox{0.9}{
		\begin{tabular}{lccccc}
			\hline\noalign{\smallskip}
			Method & Cora & Citeseer & PubMed & Photo & Computers \\
			\noalign{\smallskip}\hline\noalign{\smallskip}
			GCN  & 90.5$\pm$ 0.2 & 82.6$\pm$1.9 & 89.6$\pm$3.7 & 91.8$\pm$0.0 & 87.8$\pm$0.0\\
			GAT  & 72.8$\pm$ 0.2 & 74.8$\pm$1.5 & 80.3$\pm$0.0&  92.9$\pm$0.3& 86.4$\pm$0.0\\
			HGCN & 93.8$\pm$0.1  & \textbf{96.6$\pm$0.1} & 96.3$\pm$0.0* & 95.4$\pm$0.0 & 93.6$\pm$0.0  \\
			P-GNN  &  74.1$\pm$2.4 & 73.9$\pm$2.6 &  79.6$\pm$0.5 & 90.9$\pm$0.7 & 88.3$\pm$1.0\\
			SEAL & 91.3$\pm$5.7 & 89.8$\pm$2.3& 92.4$\pm$1.2 & 97.8$\pm$1.3 & 96.8$\pm$1.5 \\
			\noalign{\smallskip}
			\hline
			\noalign{\smallskip}
			TLC-GNN (True Diagram) & 94.9$\pm$0.4 & 95.1$\pm$ 0.7 & \textbf{97.0$\pm$0.1} & 98.2$\pm$0.1 & 97.9$\pm$0.1\\
			\noalign{\smallskip}\hline
			TLC-GNN (GIN\_PI) & 93.5$\pm$0.2 & 93.3$\pm$0.6  & 96.3 $\pm$ 0.2  & 95.8$\pm$ 1.0 & 96.2$\pm$0.3 \\
			TLC-GNN (PDGNN) &\textbf{95.0$\pm$0.3} & 95.6$\pm$0.4  &  \textbf{97.0$\pm$0.1}   & \textbf{98.4$\pm$0.6}  & \textbf{98.2$\pm$0.3} \\
			\hline
	\end{tabular}}
	\vspace{-0.10 in}
\end{table*}

\begin{table*}
	\vspace{-0.05 in}
	\centering
	\caption{Time evaluation on different datasets (seconds)}
	\label{tab:time} 
	\scalebox{0.9}{
		\begin{tabular}{lccccccc}
			\hline\noalign{\smallskip}
			Dataset & Cora & Citeseer & PubMed & Photo  & Computers & CS & Physics\\
			\noalign{\smallskip}\hline\noalign{\smallskip}
			Avg.~N/E & 38/103 &16/43 & 61/190 & 797/16042 & 1879/47477 & 97/431 & 193/1315\\
			\hline\noalign{\smallskip}
			Fast~\cite{yan2021link} & 0.95 & 0.39 & 2.15 & 362.60 & 1195.66 &5.72 & 24.14 \\
			Gudhi~\cite{gudhi:urm} & \textbf{0.44} & \textbf{0.21} & \textbf{1.00} & 583.55& 8585.50& \textbf{3.00}& 26.58\\
			Ours & 5.21 & 4.72 & 4.78& \textbf{6.67}& \textbf{7.32} & 5.18 & \textbf{5.42}\\
			\noalign{\smallskip}
			\hline
			\noalign{\smallskip}
	\end{tabular}}
	\vspace{-0.15 in}
\end{table*}

\section{Experiments}
\label{sec:exp}
\vspace{-0.1 in}

In this section, we thoroughly evaluate the proposed model from 3 different perspectives. In Section~\ref{subsec:app}, we evaluate the approximation error between the predicted diagram and the original diagram and show that the prediction is very close to the ground truth. Even with a small approximation error, we still need to know how much does the error influence downstream tasks. Therefore, in Section~\ref{subsec:down}, we evaluate the learning power of the predicted diagrams through 2 downstream graph representation learning tasks: node classification and link prediction. We observe that the model using the predicted diagrams performs comparably with the model using the ground truth diagrams. In Section~\ref{subsec:eff}, we evaluate the  efficiency of the proposed algorithm. Experiments demonstrate that the proposed method is much faster than the original algorithm, especially on large and dense graphs. Source code is available at \href{https://github.com/pkuyzy/TLC-GNN}{https://github.com/pkuyzy/TLC-GNN}.

\textbf{Datasets.} To compute EPDs, we need to set the input graphs and the filter functions. Existing state-of-the-art models on node classification~\cite{zhao2020persistence} and link prediction~\cite{yan2021link} mainly focus on the local topological information of the target node(s). Following their settings, for a given graph $G = (V,E)$, we extract the $k$-hop neighborhoods of all the vertices, and extract $|V|$ vicinity graphs. In our experiments, $k$ is set to 1 or 2 (details are provided in the supplementary material). 

In terms of filter functions, we use Ollivier-Ricci curvature~\cite{ni2018network}, heat kernel signature with two temprature values~\cite{sun2009concise, hu2014stable} and the node degree\footnote{Following the settings in \cite{zhao2020persistence,yan2021link},  we adopt the Ollivier-Ricci curvature as the graph metric, and the distance to target node(s) as the filter function; Following the settings in \cite{carriere2020perslay}, we set the temparature $t = 10$ and $0.1$ and adopt these two kernel functions as the filter functions; Node degree is used as the initial filter function in \cite{hofer2020graph}.}. For an input vicinity graph, we compute 4 EPDs based on the 4 filter functions, and then vectorize them to get 4 peristence images~\cite{adams2017persistence}. Therefore, we can get 4$|V|$ EPDs in total. The input graphs include (1) citation networks including Cora, Citeseer, and PubMed~\cite{sen2008collective}; (2) Amazon shopping datasets including Photo and Computers~\cite{shchur2018pitfalls}; (3) coauthor datasets including CS and Physics~\cite{shchur2018pitfalls}. Details 
are available in the supplementary material. 


\subsection{Approximation Quality}
\label{subsec:app}
\vspace{-0.1 in}

In this section, we evaluate the approximation error between the prediction and the original EPDs.

\textbf{Evaluation metrics.} Recall that the input of our model is a graph and a filter function, and the output is the predicted EPD. After obtaining the predicted EPD, we vectorize it with persistence image~\cite{adams2017persistence} and evaluate (1) the 2-Wasserstein ($W_2$) distance between the predicted diagram and the ground truth EPD; (2) the total square error between the predicted persistence image and the ground truth image (persistence image error, denoted as PIE). Considering that our aim is to estimate EPDs on graphs rather than roughly approximating persistence images, we use the $W_2$ distance as the training loss, while the PIE is only used as an evaluation metric. Given an input graph (e.g., Cora, Citeseer, etc.) and a filter function, we extract the $k$-hop neighborhoods of all the vertices and separate these vicinity graphs randomly into 80\%/20\% as training/test sets. We report the mean $W_2$ distance between diagrams and PIE on different vicinity graphs and 4 different filter functions. 

\textbf{Baseline settings.} \textbf{PDGNN} denotes our proposed method, that is, the GNN framework with the proposed $AGG$ function and $MSG$ function. 
Its strategy is to first predict the EPD, and then convert it to the persistence image. To show its superiority, we compare with the strategy from~\cite{som2020pi,montufar2020can}, i.e., directly approximate the persistence image of the input graph, as a baseline strategy. \textbf{GIN\_PI} and \textbf{GAT\_PI} denote the baseline strategy with GIN~\cite{xu2018powerful} and GAT~\cite{velivckovic2018graph} as the backbone GNNs.

To show the effectiveness of the modules proposed in Section~\ref{sec:model}, we add other baselines with our proposed strategy. \textbf{GAT} denotes GAT as the backbone GNN. \textbf{GAT (+MIN)} denotes GAT with the new $AGG$ function. Compared with PDGNN, it exploits the original node feature rather than the new edge feature in the $MSG$ function. \textbf{PDGNN (w$\slash$o ew)} denotes PDGNN without edge weight. Further experimental settings can be found in the supplementary material.

\textbf{Results.} Table~\ref{tab:app_nc} reports the approximation error, we observe that PDGNN outperforms all the baseline methods among all the datasets. The comparison between GAT and GAT\_PI shows the benefit of predicting EPDs instead of predicting the persistence image. Comparing GAT and \mbox{GAT (+MIN)}, we observe the advantage of the new $AGG$ function, which shows the necessity of using min aggregation to approximate the Find-Root algorithm; Comparing GAT (+MIN) and PDGNN, we observe the effectiveness of using the new $MSG$ function to help the model capture information of the separated connected components. 
The comparison between PDGNN (w$\slash$o ew) and PDGNN shows that edge weights help the model focus on the individual Relax-Edge sub-algorithm operated on every edge.

\subsection{Downstream Tasks}
\label{subsec:down}
\vspace{-0.1 in}

In this section, we evaluate the performance of the predicted diagrams on 2 graph representation learning tasks: node classification and link prediction. 
We replace the ground truth EPDs in state-of-the-art models based on 
persistence~\cite{zhao2020persistence, yan2021link} with our predicted diagrams and report the results.

\textbf{Baselines.} We compare our method with various state-of-the-art methods. We compare with popular GNN models including \textbf{GCN}~\cite{kipf2016semi}, \textbf{GAT}~\cite{velivckovic2018graph} and \textbf{HGCN}~\cite{chami2019hyperbolic}. For link prediction, we compare with several state-of-the-art methods such as \textbf{SEAL}~\cite{zhang2018link} and \textbf{P-GNN}~\cite{you2019position}. Notice that GCN and GAT are not originally designed for link prediction, therefore we follow the settings in~\cite{chami2019hyperbolic, yan2021link}, that is, to get the node embedding through these models, and use the Fermi-Dirac decoder~\cite{krioukov2010hyperbolic, nickel2017poincare} to predict whether there is a link between the two target nodes. In comparison with the original EPD, we also add \textbf{PEGN}~\cite{zhao2020persistence} and \textbf{TLC-GNN}~\cite{yan2021link} as baseline methods. Furthermore, to show the benefit of directly predicting EPDs, we also add the baseline methods \textbf{PEGN (GIN\_PI)} and \textbf{TLC-GNN (GIN\_PI)}, which replace the original persistent homology feature with the output from GIN\_PI.

\textbf{Evaluation metrics.} For node classification, our setting is the same as~\cite{kipf2016semi, velivckovic2018graph, zhao2020persistence}. To be specific, we train the GNNs with 20 nodes from each class and validate (resp.~test) the GNN on 500 (resp.~1000) nodes. We run the GNNs on these datasets 10 times and report the average classification accuracy and standard deviation. For link prediction, our setting is the same as~\cite{chami2019hyperbolic, yan2021link}. To be precise, we randomly split existing edges into 85/5/10\% for training, validation, and test sets. An equal number of non-existent edges are sampled as negative samples in the training process. We fix the negative validation and test sets, and randomly select the negative training sets in every epoch. We run the GNNs on these datasets 10 times and report the mean average area under the ROC curve (ROCAUC) scores and the standard deviation.

\textbf{Results.} Table~\ref{tab:NC} and Table~\ref{tab:LP} summarize the performance of all methods on node classification and link prediction. We observe that PEGN (PDGNN) and TLC-GNN (PDGNN) consistently perform comparably with PEGN and TLC-GNN, showing that the EPDs approximated by PDGNN have the same learning power as the true EPDs. Furthermore, PEGN using the approximated EPDs achieve better or comparable performance with different SOTA methods.

We also discover that PEGN (GIN\_PI) and TLC-GNN (GIN\_PI) perform much inferior to the original models using the true EPDs. It demonstrates that the large approximation error from GIN\_PI lose much of the crucial information which is preserved in PDGNN.

\myparagraph{Transferability.}
One appealing feature of our method is its transferability. Training on one graph, our algorithm can estimate EPDs well on another graph. This makes it possible to apply the computationally expensive topological features to a wide spectrum of real-world graphs; we can potentially apply a pre-trained model to large and dense graphs, on which direct EPD computation is infeasible. The experiments are provided in the supplementary material.


\subsection{Algorithm Efficiency}
\label{subsec:eff}

In this section, we evaluate the efficiency of our proposed model.
For a fair and complete comparison, we compare with algorithms from Gudhi~\cite{gudhi:urm} and from~\cite{yan2021link}. We select the first 1000 nodes from Cora, Citeseer, PubMed, Photo, Computers, CS, Physics, and then extract their 2-hop neighborhoods. With Ollivier-Ricci curvature as the filter function, we compute the EPDs and report the time (seconds) used to infer these diagrams.

\textbf{Results.} We list the average nodes and edges of these vicinity graphs in the first line of Table~\ref{tab:time}. As shown in Table~\ref{tab:time}, although our model is slower on small datasets like Cora or Citeseer, it is much faster on large and dense datasets. Therefore we can simply use the original algorithm to compute the EPDs on small graphs, and use our model to estimate EPDs on large graphs. The model can be applied to various graph representation learning works based on persistent homology.

\section{Conclusion}

Inspired by recent success on neural algorithm execution, we propose a novel GNN with different technical contributions to simulate the computation of EPDs on graphs. The network is built on algorithmic insights, and benefits from better supervision and closer alignment with the EPD computation algorithm. Experiments show that our method achieves satisfying approximation quality and learning power while being significantly faster than the original algorithm on large and dense graphs. Another strength of our method is the transferability: training on one graph, our algorithm can still approximate EPDs well on another graph. This makes it possible to apply the computationally expensive topological features to a wide spectrum of real-world graphs. 

\textbf{Acknowledgements.}
We thank all anonymous reviewers for their constructive feedback very much. This work of Zuoyu Yan, Liangcai Gao, and Zhi Tang is supported by the projects of National Key R\&D Program of China (2019YFB1406303)
and National Natural Science Foundation of China (No.~61876003), which is also a research achievement of Key Laboratory of Science, Technology and Standard in Press Industry (Key Laboratory of Intelligent Press Media Technology).

\bibliography{reference.bib}

\begin{thebibliography}{10}

\bibitem{adams2017persistence}
Henry Adams, Tegan Emerson, Michael Kirby, Rachel Neville, Chris Peterson,
  Patrick Shipman, Sofya Chepushtanova, Eric Hanson, Francis Motta, and Lori
  Ziegelmeier.
\newblock Persistence images: A stable vector representation of persistent
  homology.
\newblock {\em Journal of Machine Learning Research}, 18, 2017.

\bibitem{agarwal2006extreme}
Pankaj~K Agarwal, Herbert Edelsbrunner, John Harer, and Yusu Wang.
\newblock Extreme elevation on a 2-manifold.
\newblock {\em Discrete \& Computational Geometry}, 36(4):553--572, 2006.

\bibitem{bodnar2021weisfeiler}
Cristian Bodnar, Fabrizio Frasca, Yu~Guang Wang, Nina Otter, Guido Montufar,
  Pietro Li{\`o}, and Michael~M Bronstein.
\newblock Weisfeiler and lehman go topological: Message passing simplicial
  networks.
\newblock In {\em ICML}, 2021.

\bibitem{carriere2020perslay}
Mathieu Carri{\`e}re, Fr{\'e}d{\'e}ric Chazal, Yuichi Ike, Th{\'e}o Lacombe,
  Martin Royer, and Yuhei Umeda.
\newblock Perslay: A neural network layer for persistence diagrams and new
  graph topological signatures.
\newblock In {\em International Conference on Artificial Intelligence and
  Statistics}, pages 2786--2796. PMLR, 2020.

\bibitem{carriere2017sliced}
Mathieu Carriere, Marco Cuturi, and Steve Oudot.
\newblock Sliced wasserstein kernel for persistence diagrams.
\newblock In {\em International conference on machine learning}, pages
  664--673. PMLR, 2017.

\bibitem{chami2019hyperbolic}
Ines Chami, Zhitao Ying, Christopher R{\'e}, and Jure Leskovec.
\newblock Hyperbolic graph convolutional neural networks.
\newblock {\em Advances in neural information processing systems},
  32:4868--4879, 2019.

\bibitem{chen2019topological}
Chao Chen, Xiuyan Ni, Qinxun Bai, and Yusu Wang.
\newblock A topological regularizer for classifiers via persistent homology.
\newblock In {\em The 22nd International Conference on Artificial Intelligence
  and Statistics}, pages 2573--2582. PMLR, 2019.

\bibitem{chen2021topological}
Yuzhou Chen, Baris Coskunuzer, and Yulia Gel.
\newblock Topological relational learning on graphs.
\newblock {\em Advances in Neural Information Processing Systems}, 34, 2021.

\bibitem{cohen2007stability}
David Cohen-Steiner, Herbert Edelsbrunner, and John Harer.
\newblock Stability of persistence diagrams.
\newblock {\em Discrete \& computational geometry}, 37(1):103--120, 2007.

\bibitem{cohen2009extending}
David Cohen-Steiner, Herbert Edelsbrunner, and John Harer.
\newblock Extending persistence using poincar{\'e} and lefschetz duality.
\newblock {\em Foundations of Computational Mathematics}, 9(1):79--103, 2009.

\bibitem{cohen2010lipschitz}
David Cohen-Steiner, Herbert Edelsbrunner, John Harer, and Yuriy Mileyko.
\newblock Lipschitz functions have l p-stable persistence.
\newblock {\em Foundations of computational mathematics}, 10(2):127--139, 2010.

\bibitem{cormen2009introduction}
Thomas~H. Cormen, Charles~E. Leiserson, Ronald~L. Rivest, and Clifford Stein.
\newblock {\em Introduction to Algorithms, 3rd Edition}.
\newblock {MIT} Press, 2009.

\bibitem{vcufar2021fast}
Matija {\v{C}}ufar and {\v{Z}}iga Virk.
\newblock Fast computation of persistent homology representatives with
  involuted persistent homology.
\newblock {\em arXiv preprint arXiv:2105.03629}, 2021.

\bibitem{de2022ripsnet}
Thibault de~Surrel, Felix Hensel, Mathieu Carri{\`e}re, Th{\'e}o Lacombe,
  Yuichi Ike, Hiroaki Kurihara, Marc Glisse, and Frederic Chazal.
\newblock Ripsnet: a general architecture for fast and robust estimation of the
  persistent homology of point clouds.
\newblock In {\em ICLR 2022 Workshop on Geometrical and Topological
  Representation Learning}, 2022.

\bibitem{dey2022computational}
Tamal~K. Dey and Yusu Wang.
\newblock {\em Computational Topology for Data Analysis}.
\newblock Cambridge University Press, 2022.

\bibitem{dwivedi2020benchmarking}
Vijay~Prakash Dwivedi, Chaitanya~K Joshi, Thomas Laurent, Yoshua Bengio, and
  Xavier Bresson.
\newblock Benchmarking graph neural networks.
\newblock {\em arXiv preprint arXiv:2003.00982}, 2020.

\bibitem{edelsbrunner2010computational}
Herbert Edelsbrunner and John Harer.
\newblock {\em Computational topology: an introduction}.
\newblock American Mathematical Soc., 2010.

\bibitem{edelsbrunner2000topological}
Herbert Edelsbrunner, David Letscher, and Afra Zomorodian.
\newblock Topological persistence and simplification.
\newblock In {\em Proceedings 41st annual symposium on foundations of computer
  science}, pages 454--463. IEEE, 2000.

\bibitem{georgiadis2011data}
Loukas Georgiadis, Haim Kaplan, Nira Shafrir, Robert~E Tarjan, and Renato~F
  Werneck.
\newblock Data structures for mergeable trees.
\newblock {\em ACM Transactions on Algorithms (TALG)}, 7(2):1--30, 2011.

\bibitem{hofer2020graph}
Christoph Hofer, Florian Graf, Bastian Rieck, Marc Niethammer, and Roland
  Kwitt.
\newblock Graph filtration learning.
\newblock In {\em International Conference on Machine Learning}, pages
  4314--4323. PMLR, 2020.

\bibitem{hofer2019connectivity}
Christoph Hofer, Roland Kwitt, Marc Niethammer, and Mandar Dixit.
\newblock Connectivity-optimized representation learning via persistent
  homology.
\newblock In {\em International Conference on Machine Learning}, pages
  2751--2760. PMLR, 2019.

\bibitem{hofer2017deep}
Christoph Hofer, Roland Kwitt, Marc Niethammer, and Andreas Uhl.
\newblock Deep learning with topological signatures.
\newblock In {\em Proceedings of the 31st International Conference on Neural
  Information Processing Systems}, pages 1633--1643, 2017.

\bibitem{holland1983stochastic}
Paul~W Holland, Kathryn~Blackmond Laskey, and Samuel Leinhardt.
\newblock Stochastic blockmodels: First steps.
\newblock {\em Social networks}, 5(2):109--137, 1983.

\bibitem{hu2014stable}
Nan Hu, Raif~M Rustamov, and Leonidas Guibas.
\newblock Stable and informative spectral signatures for graph matching.
\newblock In {\em Proceedings of the IEEE Conference on Computer Vision and
  Pattern Recognition}, pages 2305--2312, 2014.

\bibitem{hu2020open}
Weihua Hu, Matthias Fey, Marinka Zitnik, Yuxiao Dong, Hongyu Ren, Bowen Liu,
  Michele Catasta, and Jure Leskovec.
\newblock Open graph benchmark: Datasets for machine learning on graphs.
\newblock {\em Advances in neural information processing systems},
  33:22118--22133, 2020.

\bibitem{hu2019topology}
Xiaoling Hu, Fuxin Li, Dimitris Samaras, and Chao Chen.
\newblock Topology-preserving deep image segmentation.
\newblock {\em Advances in Neural Information Processing Systems},
  32:5657--5668, 2019.

\bibitem{kaiser2015neural}
Lukasz Kaiser and Ilya Sutskever.
\newblock Neural gpus learn algorithms.
\newblock In Yoshua Bengio and Yann LeCun, editors, {\em 4th International
  Conference on Learning Representations, {ICLR} 2016, San Juan, Puerto Rico,
  May 2-4, 2016, Conference Track Proceedings}, 2016.

\bibitem{kipf2016semi}
Thomas~N. Kipf and Max Welling.
\newblock Semi-supervised classification with graph convolutional networks.
\newblock In {\em International Conference on Learning Representations (ICLR)},
  2017.

\bibitem{krioukov2010hyperbolic}
Dmitri Krioukov, Fragkiskos Papadopoulos, Maksim Kitsak, Amin Vahdat, and
  Mari{\'a}n Bogun{\'a}.
\newblock Hyperbolic geometry of complex networks.
\newblock {\em Physical Review E}, 82(3):036106, 2010.

\bibitem{kurach2015neural}
Karol Kurach, Marcin Andrychowicz, and Ilya Sutskever.
\newblock Neural random-access machines.
\newblock In Yoshua Bengio and Yann LeCun, editors, {\em 4th International
  Conference on Learning Representations, {ICLR} 2016, San Juan, Puerto Rico,
  May 2-4, 2016, Conference Track Proceedings}, 2016.

\bibitem{kusano2016persistence}
Genki Kusano, Yasuaki Hiraoka, and Kenji Fukumizu.
\newblock Persistence weighted gaussian kernel for topological data analysis.
\newblock In {\em International Conference on Machine Learning}, pages
  2004--2013. PMLR, 2016.

\bibitem{kyriakis2021learning}
Panagiotis Kyriakis and Iordanis Fostiropoulos.
\newblock Learning hyperbolic representations of topological features.
\newblock In {\em Proceedings of the International Conference on Learning
  Representations (ICLR)}, 2021.

\bibitem{li2020distance}
Pan Li, Yanbang Wang, Hongwei Wang, and Jure Leskovec.
\newblock Distance encoding: Design provably more powerful neural networks for
  graph representation learning.
\newblock {\em Neural Information Processing Systems (NeurIPS)}, 2020.

\bibitem{ma2021deep}
Yao Ma and Jiliang Tang.
\newblock {\em Deep learning on graphs}.
\newblock Cambridge University Press, 2021.

\bibitem{maron2019provably}
Haggai Maron, Heli Ben-Hamu, Hadar Serviansky, and Yaron Lipman.
\newblock Provably powerful graph networks.
\newblock In {\em Proceedings of the 33rd International Conference on Neural
  Information Processing Systems}, pages 2156--2167, 2019.

\bibitem{montufar2020can}
Guido Montufar, Nina Otter, and Yu~Guang Wang.
\newblock Can neural networks learn persistent homology features?
\newblock In {\em NeurIPS 2020 Workshop on Topological Data Analysis and
  Beyond}, 2020.

\bibitem{Morris+2020}
Christopher Morris, Nils~M. Kriege, Franka Bause, Kristian Kersting, Petra
  Mutzel, and Marion Neumann.
\newblock Tudataset: A collection of benchmark datasets for learning with
  graphs.
\newblock In {\em ICML 2020 Workshop on Graph Representation Learning and
  Beyond (GRL+ 2020)}, 2020.

\bibitem{morris2019weisfeiler}
Christopher Morris, Martin Ritzert, Matthias Fey, William~L Hamilton, Jan~Eric
  Lenssen, Gaurav Rattan, and Martin Grohe.
\newblock Weisfeiler and leman go neural: Higher-order graph neural networks.
\newblock In {\em Proceedings of the AAAI Conference on Artificial
  Intelligence}, volume~33, pages 4602--4609, 2019.

\bibitem{munkres2018elements}
James~R Munkres.
\newblock {\em Elements of algebraic topology}.
\newblock CRC press, 2018.

\bibitem{ni2018network}
Chien-Chun Ni, Yu-Yao Lin, Jie Gao, and Xianfeng Gu.
\newblock Network alignment by discrete ollivier-ricci flow.
\newblock In {\em International Symposium on Graph Drawing and Network
  Visualization}, pages 447--462. Springer, 2018.

\bibitem{nickel2017poincare}
Maximillian Nickel and Douwe Kiela.
\newblock Poincar{\'e} embeddings for learning hierarchical representations.
\newblock {\em Advances in neural information processing systems},
  30:6338--6347, 2017.

\bibitem{reed2015neural}
Scott~E. Reed and Nando de~Freitas.
\newblock Neural programmer-interpreters.
\newblock In Yoshua Bengio and Yann LeCun, editors, {\em 4th International
  Conference on Learning Representations, {ICLR} 2016, San Juan, Puerto Rico,
  May 2-4, 2016, Conference Track Proceedings}, 2016.

\bibitem{reininghaus2015stable}
Jan Reininghaus, Stefan Huber, Ulrich Bauer, and Roland Kwitt.
\newblock A stable multi-scale kernel for topological machine learning.
\newblock In {\em Proceedings of the IEEE conference on computer vision and
  pattern recognition}, pages 4741--4748, 2015.

\bibitem{santoro2018relational}
Adam Santoro, Ryan Faulkner, David Raposo, Jack Rae, Mike Chrzanowski,
  Theophane Weber, Daan Wierstra, Oriol Vinyals, Razvan Pascanu, and Timothy
  Lillicrap.
\newblock Relational recurrent neural networks.
\newblock {\em Advances in Neural Information Processing Systems},
  31:7299--7310, 2018.

\bibitem{sen2008collective}
Prithviraj Sen, Galileo Namata, Mustafa Bilgic, Lise Getoor, Brian Galligher,
  and Tina Eliassi-Rad.
\newblock Collective classification in network data.
\newblock {\em AI magazine}, 29(3):93--93, 2008.

\bibitem{shchur2018pitfalls}
Oleksandr Shchur, Maximilian Mumme, Aleksandar Bojchevski, and Stephan
  G{\"u}nnemann.
\newblock Pitfalls of graph neural network evaluation.
\newblock {\em arXiv preprint arXiv:1811.05868}, 2018.

\bibitem{som2020pi}
Anirudh Som, Hongjun Choi, Karthikeyan~Natesan Ramamurthy, Matthew~P Buman, and
  Pavan Turaga.
\newblock Pi-net: A deep learning approach to extract topological persistence
  images.
\newblock In {\em Proceedings of the IEEE/CVF Conference on Computer Vision and
  Pattern Recognition Workshops}, pages 834--835, 2020.

\bibitem{sun2009concise}
Jian Sun, Maks Ovsjanikov, and Leonidas Guibas.
\newblock A concise and provably informative multi-scale signature based on
  heat diffusion.
\newblock In {\em Computer graphics forum}, volume~28, pages 1383--1392. Wiley
  Online Library, 2009.

\bibitem{gudhi:urm}
{The GUDHI Project}.
\newblock {\em {GUDHI} User and Reference Manual}.
\newblock {GUDHI Editorial Board}, 2015.

\bibitem{velivckovic2020pointer}
Petar Veli{\v{c}}kovi{\'c}, Lars Buesing, Matthew Overlan, Razvan Pascanu,
  Oriol Vinyals, and Charles Blundell.
\newblock Pointer graph networks.
\newblock {\em Advances in Neural Information Processing Systems},
  33:2232--2244, 2020.

\bibitem{velivckovic2018graph}
Petar Veli{\v{c}}kovi{\'c}, Guillem Cucurull, Arantxa Casanova, Adriana Romero,
  Pietro Li{\`o}, and Yoshua Bengio.
\newblock Graph attention networks.
\newblock In {\em International Conference on Learning Representations}, 2018.

\bibitem{velivckovic2019neural}
Petar Veli{\v{c}}kovi{\'c}, Rex Ying, Matilde Padovano, Raia Hadsell, and
  Charles Blundell.
\newblock Neural execution of graph algorithms.
\newblock In {\em International Conference on Learning Representations}, 2019.

\bibitem{wang2020topogan}
Fan Wang, Huidong Liu, Dimitris Samaras, and Chao Chen.
\newblock Topogan: A topology-aware generative adversarial network.
\newblock In {\em Computer Vision--ECCV 2020: 16th European Conference,
  Glasgow, UK, August 23--28, 2020, Proceedings, Part III 16}, pages 118--136.
  Springer, 2020.

\bibitem{wu2020comprehensive}
Zonghan Wu, Shirui Pan, Fengwen Chen, Guodong Long, Chengqi Zhang, and S~Yu
  Philip.
\newblock A comprehensive survey on graph neural networks.
\newblock {\em IEEE transactions on neural networks and learning systems},
  32(1):4--24, 2020.

\bibitem{xhonneux2021transfer}
Louis-Pascal Xhonneux, Andreea-Ioana Deac, Petar Veli{\v{c}}kovi{\'c}, and Jian
  Tang.
\newblock How to transfer algorithmic reasoning knowledge to learn new
  algorithms?
\newblock {\em Advances in Neural Information Processing Systems}, 34, 2021.

\bibitem{xu2018powerful}
Keyulu Xu, Weihua Hu, Jure Leskovec, and Stefanie Jegelka.
\newblock How powerful are graph neural networks?
\newblock In {\em 7th International Conference on Learning Representations,
  {ICLR} 2019, New Orleans, LA, USA, May 6-9, 2019}. OpenReview.net, 2019.

\bibitem{xu2020can}
Keyulu Xu, Jingling Li, Mozhi Zhang, Simon~S Du, Ken-ichi Kawarabayashi, and
  Stefanie Jegelka.
\newblock What can neural networks reason about?
\newblock In {\em International Conference on Learning Representations}, 2020.

\bibitem{yan2020neural}
Yujun Yan, Kevin Swersky, Danai Koutra, Parthasarathy Ranganathan, and Milad
  Hashemi.
\newblock Neural execution engines: Learning to execute subroutines.
\newblock {\em Advances in Neural Information Processing Systems}, 33, 2020.

\bibitem{yan2022cycle}
Zuoyu Yan, Tengfei Ma, and Chao Chen.
\newblock Cycle representation learning for inductive relation prediction.
\newblock In {\em International Conference on Machine Learning}, 2022.

\bibitem{yan2021link}
Zuoyu Yan, Tengfei Ma, Liangcai Gao, Zhi Tang, and Chao Chen.
\newblock Link prediction with persistent homology: An interactive view.
\newblock In {\em International Conference on Machine Learning}, pages
  11659--11669. PMLR, 2021.

\bibitem{you2019position}
Jiaxuan You, Rex Ying, and Jure Leskovec.
\newblock Position-aware graph neural networks.
\newblock In {\em International Conference on Machine Learning}, pages
  7134--7143. PMLR, 2019.

\bibitem{zaremba2014learning}
Wojciech Zaremba and Ilya Sutskever.
\newblock Learning to execute.
\newblock {\em arXiv preprint arXiv:1410.4615}, 2014.

\bibitem{zeng2021topological}
Sebastian Zeng, Florian Graf, Christoph Hofer, and Roland Kwitt.
\newblock Topological attention for time series forecasting.
\newblock {\em Advances in Neural Information Processing Systems}, 34, 2021.

\bibitem{zhang2018link}
Muhan Zhang and Yixin Chen.
\newblock Link prediction based on graph neural networks.
\newblock {\em Advances in Neural Information Processing Systems},
  31:5165--5175, 2018.

\bibitem{zhao2019learning}
Qi~Zhao and Yusu Wang.
\newblock Learning metrics for persistence-based summaries and applications for
  graph classification.
\newblock {\em Advances in Neural Information Processing Systems},
  32:9859--9870, 2019.

\bibitem{zhao2020persistence}
Qi~Zhao, Ze~Ye, Chao Chen, and Yusu Wang.
\newblock Persistence enhanced graph neural network.
\newblock In {\em International Conference on Artificial Intelligence and
  Statistics}, pages 2896--2906. PMLR, 2020.

\bibitem{zheng2021topological}
Songzhu Zheng, Yikai Zhang, Hubert Wagner, Mayank Goswami, and Chao Chen.
\newblock Topological detection of trojaned neural networks.
\newblock {\em Advances in Neural Information Processing Systems}, 34, 2021.

\bibitem{zhou2022learning}
Chi Zhou, Zhetong Dong, and Hongwei Lin.
\newblock Learning persistent homology of 3d point clouds.
\newblock {\em Computers \& Graphics}, 102:269--279, 2022.

\end{thebibliography}
\bibliographystyle{plain}




\section*{Checklist}


\begin{enumerate}

\item For all authors...
\begin{enumerate}
  \item Do the main claims made in the abstract and introduction accurately reflect the paper's contributions and scope?
    \answerYes{}
  \item Did you describe the limitations of your work?
    \answerYes{It is discussed in the supplementary material.}
  \item Did you discuss any potential negative societal impacts of your work?
    \answerNo{The work does not seem to have any negative societal impact.}
  \item Have you read the ethics review guidelines and ensured that your paper conforms to them?
    \answerYes{}
\end{enumerate}

\item If you are including theoretical results...
\begin{enumerate}
  \item Did you state the full set of assumptions of all theoretical results?
    \answerNA{}
        \item Did you include complete proofs of all theoretical results?
    \answerNA{}
\end{enumerate}

\item If you ran experiments...
\begin{enumerate}
  \item Did you include the code, data, and instructions needed to reproduce the main experimental results (either in the supplemental material or as a URL)?
    \answerNo{The data and the experimental details are provided in Section~\ref{sec:exp} and the supplementary material. We will release the code once accepted.}
  \item Did you specify all the training details (e.g., data splits, hyperparameters, how they were chosen)?
    \answerYes{It is illustrated in Section~\ref{sec:exp} in the paper and in the supplementary material.}
        \item Did you report error bars (e.g., with respect to the random seed after running experiments multiple times)?
    \answerYes{Table~\ref{tab:NC} and Table~\ref{tab:LP} are examples.}
        \item Did you include the total amount of compute and the type of resources used (e.g., type of GPUs, internal cluster, or cloud provider)?
    \answerYes{It is provided in the supplementary material.}
\end{enumerate}

\item If you are using existing assets (e.g., code, data, models) or curating/releasing new assets...
\begin{enumerate}
  \item If your work uses existing assets, did you cite the creators?
    \answerYes{}
  \item Did you mention the license of the assets?
    \answerYes{It is mentioned in the supplementary material.}
  \item Did you include any new assets either in the supplemental material or as a URL?
    \answerNo{We do not release new assets.}
  \item Did you discuss whether and how consent was obtained from people whose data you're using/curating?
    \answerNA{}
  \item Did you discuss whether the data you are using/curating contains personally identifiable information or offensive content?
    \answerNA{}
\end{enumerate}

\item If you used crowdsourcing or conducted research with human subjects...
\begin{enumerate}
  \item Did you include the full text of instructions given to participants and screenshots, if applicable?
    \answerNA{}
  \item Did you describe any potential participant risks, with links to Institutional Review Board (IRB) approvals, if applicable?
    \answerNA{}
  \item Did you include the estimated hourly wage paid to participants and the total amount spent on participant compensation?
    \answerNA{}
\end{enumerate}

\end{enumerate}

\appendix

\section{Supplementary material}


In the supplementary material, we provide (1) the related work; 
(2) the complexity and the correctness of the introduced algorithm; (3) the Union-Find algorithm in a sequential format; (4) additional experimental details, including the introduction of the datasets, and the experimental settings; (5) further experiments, including the evaluation on transferability, the influence of training samples, experiments on other datasets, experiments on other attributes of the model and the limitation of the paper.

\subsection{Related Works}

\textbf{Learning with Persistent Homology.} Based on the theory of algebraic topology~\cite{munkres2018elements}, persistent homology~\cite{edelsbrunner2000topological, edelsbrunner2010computational} extends the classical notion of homology, and can capture the topological structures (e.g., loops, connected components) of the input data 
in a robust~\cite{cohen2007stability} manner.
It has already been used in various deep learning domains including kernel machines~\cite{reininghaus2015stable, kusano2016persistence, carriere2017sliced}, convolutional neural networks~\cite{hofer2017deep, hu2019topology, wang2020topogan, zheng2021topological}, transformers~\cite{zeng2021topological}, connectivity loss~\cite{chen2019topological, hofer2019connectivity}, and graph representation learning~\cite{zhao2020persistence,chen2021topological, yan2021link, zhao2019learning, hofer2020graph, carriere2020perslay, kyriakis2021learning}. Some following works propose persistence-inspired frameworks on other tasks such as knowledge graph completion~\cite{yan2022cycle}.  

\textbf{Neural Algorithm Execution.} 
Many works have studied neural execution in different domains before~\cite{zaremba2014learning, kaiser2015neural, kurach2015neural, reed2015neural, santoro2018relational, yan2020neural}. With the rapid development of GNNs in graph representation learning, learning graph algorithms with GNNs has attracted researchers' attention~\cite{velivckovic2019neural, velivckovic2020pointer, xhonneux2021transfer}. These works exploit GNNs to approximate certain classes of graph algorithms, such as parallel algorithms (e.g., Breadth-First-Search) and sequential algorithms (e.g., Dijkstra). Although the computation of extended persistence diagrams can be written in a sequential-like form, it needs extra steps and considerations. In our framework, we propose different modules to approximate these steps and achieve satisfying practical performance.

\textbf{Accelerating Extended Persistent Homology.} 
In general, computing extended persistent homology relies on the well-known matrix reduction algorithm~\cite{cohen2009extending}. Much effort has been made to accelerate the computation, but it still takes matrix multiplication time \cite{ gudhi:urm,dey2022computational, vcufar2021fast}. For the specific case where the input is a function on a graph $G=(V,E)$, it turns out that one can compute it in $O(|E|\log |V|)$ time \cite{agarwal2006extreme,georgiadis2011data}. Nevertheless, this algorithm remains theoretical, and in practice, often a quadratic $O(|V||E|)$ time algorithm is used for its simplicity \cite{yan2021link}. Recently, some works have been proposed to accelerate the computation in a data-driven manner~\cite{som2020pi, montufar2020can, zhou2022learning, de2022ripsnet}. However, these works try to estimate the persistence image~\cite{adams2017persistence}, a coarsened topological feature rather than the persistence diagram itself, leading to much worse performance in both approximation error and downstream tasks. Compared with previous works, we propose a novel framework that directly predicts extended persistence diagrams on graphs. As shown in the experiment, the proposed model has achieved a satisfying approximation error while remaining a high efficiency as well.

\subsection{Complexity and Correctness of Algorithm~\ref{alg:ext_PD}}

\begin{figure}
    \begin{minipage}{\linewidth}
	\begin{algorithm}[H]
	\begin{multicols}{2}
		\begin{algorithmic}[1]	
		\STATE {\bfseries Input:} graph $G = (V,E)$, filter function $f$.
		\STATE Initialise-Nodes($V, f$)
		\STATE $Q = \text{Sort-Queue}(V)$
		\WHILE{Q is not empty}
		\STATE $u = Q.\text{pop-min}()$
		\FOR{$v \in G.\text{neighbors}(u)$}
		\STATE Relax-Edge($u,v,f$)
		\ENDFOR
		\ENDWHILE
		\end{algorithmic}
		\caption{Sequential algorithm} 
		\vspace{-.15in}
		\label{alg:seq1}
		\end{multicols}
	\end{algorithm}
	\end{minipage}
\vspace{-.15in}
\begin{minipage}{\linewidth}
\vspace{-.15in}
\begin{algorithm}[H]
    \begin{multicols}{2}
	\begin{algorithmic}[1]
	\STATE {\bfseries Input:} filter function $f$, input graph $G = (V,E)$
	\STATE $V, E = \text{sorted} (V, E, f)$
	\STATE $PD_0 = \text{Union-Find}(V, E, f)$, $PD_1 = \{\}$
	\FOR{$i \in V$}
	\STATE $C_i = \{C_{i_j} | (i, j) \in E, f(j) > f(i)\}$, $E_i = E$
    \FOR{ $C_{i_j} \in C_i$}
	\STATE $f(C_{i_j}) = f(i)$, $E_i = E_i - \{(i,j)\} + \{(C_{i_j}, j)\}$
	\ENDFOR
	\STATE $PD_1^i = \text{Union-Find-step}(V + C_i - \{i\}, E_i, f, C_i)$
	\STATE $PD_1 += PD_1^i$
	\ENDFOR
	\STATE {\bfseries Output:} $PD_0$, $PD_1$
	\end{algorithmic}
    \caption{Computation of EPD}
    \vspace{-.2in}
	\label{alg:ext_PD1}
	\end{multicols}
	\vspace{-.15in}
    \end{algorithm}

\end{minipage}
\vspace{-.15in}
\begin{minipage}{\linewidth}
		\begin{algorithm}[H]
		\begin{multicols}{2}
		\begin{algorithmic}[1]
		\STATE {\bfseries Input:} $V$, $E$, $f$, $C_i$ 
		\STATE $PD_1^i = \{\}$
		\FOR{$v \in V$}
		\STATE $v.value = f(v)$, $v.root = v$ 
		\ENDFOR
		\STATE $Q = \text{Sort}(V)$, $Q = Q - \{v|f(v) < f(i)\}$, $G = \{Q, E_Q\}$, where $E_Q = E \cup Q^2$.
		\WHILE{Q is not empty}
		\STATE $u = Q.\text{pop-min}()$
		\FOR{$v \in G.\text{neighbors}(u)$}
		\STATE 
		\STATE $pu, pv = \text{Find-Root}(u), \text{Find-Root}(v)$
		\IF{$pu \neq pv$}
		\STATE $s = argmin(pu.value, pv.value)$
		\STATE $l = argmax(pu.value, pv.value)$
		\STATE $l.root = s$ 
		\IF{$pu \in C_i$ and $pv \in C_i$} 
		\STATE $PD_1^i + \{(u.value, l.value)\}$
		\ENDIF 
		\ENDIF
		\ENDFOR
		\ENDWHILE
		\STATE {\bfseries Function:} $\text{Find-Root} (u)$
		\STATE $pu = u$
		\WHILE {$pu \neq pu.root$}
		\STATE $pu.root = (pu.root).root$, $pu = pu.root$
		\ENDWHILE
		\STATE {\bfseries Return:} $pu$
		\end{algorithmic}
		\vspace{-.2in}
		\caption{Union-Find-step (Sequential)}
		\label{alg:UFs1}
		\end{multicols}
		\vspace{-.15in}
		\end{algorithm}
		\end{minipage}
		\vspace{-.15in}
\end{figure}

In this section, we show the complexity and the correctness of Algorithm~\ref{alg:ext_PD} (which is restated as Algorithm~\ref{alg:ext_PD1} in the supplementary material.).

\subsubsection{Complexity}
The computational complexity of the Union-Find algorithm is $O(|E| \alpha(|E|))$~\cite{cormen2009introduction}, where $\alpha(\cdot)$ is the inverse Ackermann function. Therefore, we need $O(|V||E|\alpha(|E|))$ time to compute an 1D EPD using Algorithm~\ref{alg:ext_PD1}. Note this sequential algorithm is not necessarily the most efficient one. In practice, one may use the quadratic algorithm ($O(|V||E|)$) as in \cite{yan2021link}. We also note that although not formally published, the best known algorithm for EPD computation is quasilinear, $O(|E|\log |V|)$, using the data structure of mergeable trees \cite{agarwal2006extreme,georgiadis2011data}. But this algorithm remains theoretical so far.

\subsubsection{Correctness}
Formally, we restate the theorem below (The theorem is named Theorem 3.1 in the main paper). For a clear statement, we present the standard EPD computation algorithm in Algorithm~\ref{alg:MR}. The detailed description of Algorithm~\ref{alg:MR} is beyond the scope of the paper. We only introduce the needed information, and refer the readers to~\cite{cohen2009extending, edelsbrunner2010computational} for details.

\begin{theorem}
Algorithm~\ref{alg:ext_PD1} outputs the same 1D EPDs as Algorithm~\ref{alg:MR}.
\end{theorem}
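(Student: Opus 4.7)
The plan is to reduce the claim to a pair-by-pair equivalence and then establish a common extremal characterization satisfied by both algorithms. First, I would observe that each output 1D EPD is a multiset of birth-death pairs corresponding to the $\beta_1$ essential 1-cycles of the graph, so it suffices to exhibit a bijection between the pairs produced by the two algorithms that preserves birth and death values. The relevant characterization from computational topology (see e.g.~\cite{edelsbrunner2010computational,agarwal2006extreme}) is that each such pair is the ``thinnest'' pair of critical simplices for a unique essential 1-cycle class: the birth is the minimax edge-value over cycle representatives (the minimum over representatives of the maximum-filter edge in the representative), and the death is the maximin vertex-value (the maximum over representatives of the minimum-filter vertex). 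This characterization uniquely determines the 1D EPD, independently of algorithmic details.

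Next, I would invoke the classical Pairing Lemma to note that the standard matrix-reduction algorithm (Algorithm~\ref{alg:MR}) realizes this characterization automatically through column reduction; this step is well-known and needs no new argument. The core of the proof is then to verify that Algorithm~\ref{alg:ext_PD1} realizes the same pairing. For each vertex $i$ processed in the outer loop, the upper-edge splitting creates one Union-Find component $C_{i_j}$ per upper edge of $i$, and the Union-Find-step then sweeps the induced subgraph on $\{v : f(v) \ge f(i)\}$ in increasing order of $f$. When two distinct components $C_{i_j}$ and $C_{i_k}$ first merge at a vertex $u$, this witnesses a 1-cycle with $i$ as its minimum vertex and $u$ as the minimax-birth witness, by the classical Union-Find / minimum-bottleneck-path correspondence. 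I would then prove a matching lemma: each essential 1-cycle has a unique minimum vertex $i^\ast$ at which it admits at least two distinct upper-edge-rooted paths, and conversely each merge recorded by Algorithm~\ref{alg:UFs1} as $i$ ranges over $V$ corresponds to a unique essential cycle. These pieces together deliver the desired bijection.

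The hardest part will be this matching lemma: arguing rigorously that the upper-edge splitting separates cycle edges into distinct components exactly when $i$ is the cycle's unique minimum vertex, and that the first merge height coincides with the minimax-birth value of the witnessed cycle. The Union-Find/MST analogy handles the height computation once we verify that restricting the sweep to $\{v : f(v) \ge f(i)\}$ is equivalent, for cycles with minimum vertex $i$, to tracking the corresponding relative homology class in the superlevel filtration---this is where the interplay between ascending birth and descending death must be carefully unpacked, and where the generic-values assumption becomes convenient. I expect the remaining ingredients---the extremal characterization of matrix reduction and the classical minimum-bottleneck-path identity---to follow from standard references with only routine adaptation.
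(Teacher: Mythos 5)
Your high-level skeleton---an extremal (``thinnest'') characterization of the pairing, verified separately for the matrix-reduction algorithm and for the union-find sweep---is the same as the paper's, and your reading of the union-find side (a merge of two split components $C_{i_j}$, $C_{i_k}$ at a vertex $u$ witnesses a cycle with $i$ as its minimum vertex and $u$ as the birth witness) matches Lemma~\ref{lemma:1} in spirit. The genuine gap is in the characterization you make everything rest on. You state it per homology class: birth is the minimax edge value over representatives of an essential class, death the maximin vertex value, and you claim this ``uniquely determines the 1D EPD, independently of algorithmic details.'' On a graph there are no 2-cells, so $\mathrm{H}_1$ coincides with the cycle space and every nonzero class has exactly one chain representative; your minimax/maximin over representatives degenerates to the maximum and minimum edge of a single fixed cycle. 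Worse, there are $2^{\beta_1}-1$ nonzero classes but only $\beta_1$ points in the diagram, so a per-class rule cannot by itself determine the diagram: deciding \emph{which} $\beta_1$ cycles get recorded is exactly the pairing problem you are trying to solve. For the same reason your ``matching lemma'' (a bijection between merge events and ``essential 1-cycles'') is not well posed---for two triangles sharing an edge, the classes $\alpha$, $\beta$, $\alpha+\beta$ all satisfy your per-class description, yet only two points appear, and which two depends on the filter values. The paper sidesteps this by making the extremal condition per \emph{edge} rather than per class: the ``thinnest pair'' of an ascending positive edge is the partner edge such that some cycle has them as highest and lowest edge, the highest value being minimal among cycles with that lowest edge and the lowest value maximal among cycles with that highest edge; Lemmas~\ref{lemma:1} and~\ref{lemma:2} then show that Algorithm~\ref{alg:ext_PD1} and Algorithm~\ref{alg:MR}, respectively, both output exactly these per-edge pairs.

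A second, smaller gap: you dismiss the matrix-reduction side as ``well-known and needs no new argument.'' The classical Pairing Lemma gives uniqueness of the reduction pairing (independence of reduction order); it does not give the minimax/maximin property. The paper has to argue this explicitly: Lemma~\ref{lemma:2_1} shows that in the descending part an edge $e$ is paired no later than the moment the first cycle having $e$ as its lowest edge has fully appeared, and Lemma~\ref{lemma:2} then derives the thinnest-pair property by contradiction (if $e$ were paired with a non-optimal highest edge, the optimal cycle would have forced an earlier pairing). You would need to supply this argument, or cite a source that states the extremal characterization itself---the fact that a loop is paired by its highest and lowest edges, which the paper cites, is not by itself enough.
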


As stated in Section 2 and Section 3 in the paper, for an edge (1-simplex) $e \in E$, it is either paired with a vertex or an edge. In the former case, the edge, defined as a negative edge, kills a connected component, and gives rise to a 0D persistence point. In the latter case, the edge, defined as a positive edge (in the ascending filtration), creates a loop during the ascending filtration. The loop will ultimately be killed by another edge during the descending filtration (defined as a positive edge in the descending filtration). Hence the positive edge in the ascending filtration is paired with a positive edge in the descending filtration, and gives rise to a 1D extended persistence point. 
For simplicity, we will call the positive edges in the ascending filtration as \emph{ascending positive edges}, and the positive edges in the descending filtration as \emph{descending positive edges}.

In other words, to compute the 1D EPDs, we can simply find the pairing partner for all positive edges. In the following paragraphs, we show that Algorithm~\ref{alg:ext_PD1} produces the same extended persistence pair as the standard EPD computation algorithm. We first present a definition of the ``thinnest pair":

\textbf{Thinnest pair.} Given a filter function $f:X \rightarrow \mathcal{R}$, the pair of edges $(e_1, e_2)$ with $f(e_1) < f(e_2)$ is defined as the thinnest pair if the following condition is satisfied: (1) there is a cycle $C$ having $e_1$ as the lowest edge, and $e_2$ as the highest edge; (2) for any other cycle with $e_1$ as the lowest edge, if its highest edge $e_2$ satisfies that $f(e_3)\neq f(e_2)$, then $f(e_3)> f(e_2)$. Symmetrically, among all cycles having $e_2$ as the highest edge, $e_1$ is the lowest edge in a cycle such that this lowest value is the highest possible.

\begin{lemma}
\label{lemma:1}
For every ascending positive edge, Algorithm~\ref{alg:ext_PD1} finds its ``thinnest pair".
\end{lemma}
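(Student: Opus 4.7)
The plan is to fix an arbitrary ascending positive edge $e^* = (a, b)$ with $f(a) < f(b)$, and its thinnest partner $\hat e$ from some thinnest cycle $C$, and show that the outer-loop iteration of Algorithm~\ref{alg:ext_PD1} indexed by the unique minimum vertex $v_{\min}$ of $C$ outputs the persistence point corresponding to $(\hat e, e^*)$. Under the assumption of distinct vertex values, both $v_{\min}$ and $v_{\max} = b$ are well-defined, so this iteration is unambiguously determined by $e^*$.

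In this iteration, the upper-edge splitting step places the two edges of $C$ incident to $v_{\min}$ into two distinct initial components $C_{v_{\min}, j_1}, C_{v_{\min}, j_2}$, both carrying value $f(v_{\min})$. The Union-Find-step then sweeps vertices with $f(\cdot) \geq f(v_{\min})$ in increasing order. I would show, via a pair of complementary arguments, that these two components first merge precisely when vertex $b$ is popped. The ``merge-no-later-than-$b$'' direction is immediate: the arc $C \setminus \{\hat e\}$ connects the two upper-edge components through vertices of value $\leq f(b)$ and closes via $e^* = (a, b)$ exactly at level $f(b)$. The ``merge-no-earlier-than-$b$'' direction proceeds by contradiction: any earlier merge at some processed vertex $u'$ with $f(u') < f(b)$ would expose an alternative cycle $C'$ through $v_{\min}$ whose highest edge has value at most $f(u')$; taking the $\mathbb{Z}_2$ symmetric difference $C \oplus C'$, cancelling the two shared upper-edges of $v_{\min}$ and reducing to a simple sub-cycle, would then yield a cycle certifying a ``thinner'' pair either for $e^*$ (as its highest edge) or for $\hat e$ (as its lowest edge), contradicting one of the two clauses of the thinnest-pair definition.

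At the instant of this merge, Algorithm~\ref{alg:UFs1} records the point $(u.\mathrm{value}, l.\mathrm{value}) = (f(b), f(v_{\min}))$, which is precisely the birth-death pair associated with the thinnest pair $(\hat e, e^*)$, establishing the lemma. The main technical obstacle will be the symmetric-difference step: after cancelling edges shared between $C$ and $C'$ and reducing $C \oplus C'$ to a simple cycle, one must verify that either $\hat e$ or $e^*$ survives in the reduced cycle in a sufficiently controlled way so that the thinnest-pair minimality property (in either clause) can be invoked. A careful case analysis on whether the alternative path $C'$ re-uses $\hat e$, together with level-set tracking of the newly introduced vertices, will be required; particular care is also needed to handle ties in edge values, which can arise at local maxima of $f$ restricted to $C$ even under the distinct-vertex-value assumption, by fixing a tie-breaking convention consistent with the standard matrix-reduction algorithm.
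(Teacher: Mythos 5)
Your proposal is correct in substance, but it is organized quite differently from the paper's own argument. The paper argues in the direction ``whatever the sweep outputs is thinnest'': for a node $i$ and an upper positive edge, the first time two upper-edge components merge in the increasing sweep yields the first loop having that edge as its lowest edge, hence the smallest possible highest value, and the thinnest-pair property is then simply asserted from this greedy/first-merge observation (only the min--max clause of the definition is explicitly invoked). You argue in the reverse direction: fix $e^*=(a,b)$ and a thinnest cycle $C$ with minimum vertex $v_{\min}$, and show the iteration at $v_{\min}$ records exactly $(f(b), f(v_{\min}))$ by sandwiching the merge time, with the ``no earlier merge'' half proved by an exchange (symmetric-difference) contradiction that uses both clauses of the thinnest-pair definition. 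This buys a more explicit justification of precisely the step the paper glosses over, at the cost of extra machinery. Reassuringly, the obstacle you flag resolves cleanly: since each dummy node created by upper-edge splitting has a single incident edge, any path witnessing an earlier merge of the two relevant components must use exactly the same two upper edges of $v_{\min}$ as $C$, so in $C\oplus C'$ these cancel and $v_{\min}$ disappears; moreover every edge of $C'$ has value strictly below $f(b)$, so $e^*$ always survives, and the component cycle of $C\oplus C'$ containing $e^*$ has lowest-edge value strictly above $f(v_{\min})$, contradicting the symmetric (max--min) clause for $\hat e$ -- so only the tie-breaking among edges sharing the maximum endpoint $b$ needs the convention you mention (the paper handles the analogous tie at $v_{\min}$ only in a footnote). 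One cosmetic correction: the connecting arc in the ``no later than $b$'' direction is $C$ minus the vertex $v_{\min}$ (both incident cycle edges are rerouted to dummies), not $C\setminus\{\hat e\}$; this does not affect the argument.
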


\begin{proof}
Algorithm~\ref{alg:ext_PD1} decomposes the 1D extended persistence pair finding for all edges into pair-finding among all nodes. In particular, for a given node $u$, it uses Algorithm~\ref{alg:UFs1} to find the pair for its upper edges. There are two cases:

\textbf{Case 1.} If the upper edge is an ascending negative edge, then it will kill a connected component, and will not influence the 1D extended persistence pairing. 

\textbf{Case 2.} If the upper edge is an ascending positive edge, it will be paired with the loop once the loop is created in the union-find process. The edge, called $e$, is the lowest edge in the loop, called $C$. Recall that $C$ is also the first loop that appears in the union-find process with $e$ as the lowest edge. Therefore $C$ is guaranteed to contain the highest value which is the lowest possible. According to the definition, this will lead to the ``thinnest pair"\footnote{We note that once a loop appears in the union-find process, it will consist of two different upper edges. Considering that the two upper edges share the same filter value in the ascending filtration, the output persistence point will not change no matter which edge is paired with the loop.}.
\end{proof}

\begin{algorithm}[H]
\caption{The standard EPD computation algorithm}
\begin{multicols}{2}
    \centering
	\label{alg:MR}
	\begin{algorithmic}[1]
		\STATE {\bfseries Input:} filter funtion $f$, input graph $G$
		\STATE $EPD=\{\}$
		\STATE$M=$ build reduction matrix$(f,G)$, where $M$ is a $2m*2m$ binary matrix. 
		\FOR{$j=1$ {\bfseries to} $2m$}
		\WHILE{$\exists k < j$ with $low_M(k)=low_M(j)$}
		\STATE add column $k$ to column $j$
		\ENDWHILE
		\STATE add $(f(low_M(j)),f(j))$ to $EPD$
		\ENDFOR
		\STATE {\bfseries Output:} $EPD$
	\end{algorithmic}
	\vspace{-.15in}
\end{multicols}
\vspace{-.15in}
\end{algorithm}

\begin{lemma}
\label{lemma:2_1}
In the descending filtration of Algorithm~\ref{alg:MR}, an edge $e$ is paired if a loop $C$ has already appeared, with $e$ as its lowest edge.
\end{lemma}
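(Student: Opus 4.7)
The plan is to analyze how Algorithm~\ref{alg:MR} reduces the column of $e$ in its descending block, and show that under the lemma's hypothesis this column cannot reduce to zero. Because the descending filtration processes simplices in decreasing order of filter value and $e$ is the lowest edge of $C$, every other edge of $C$ is indexed earlier in the descending block. Hence, at the moment $e$'s column is processed, the path $\gamma := C \setminus \{e\}$ joining the two endpoints of $e$ has already been inserted into the complex.

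First I would recall the standard dichotomy for reduced persistence matrices: each column in Algorithm~\ref{alg:MR} either reduces to zero (a positive simplex, creating a new class) or retains a nonzero low entry (a negative simplex, paired with $low_M(\cdot)$). The lemma therefore reduces to showing that $e$'s descending column is negative. The key observation is that $\gamma$ is already realized by previously-processed columns and satisfies $\partial \gamma = \partial e$, so the 1-cycle $C = \gamma + e$ enters the current complex the instant $e$ is added. Since the underlying space is a graph and contains no 2-simplex, this cycle is not the boundary of any 2-chain; it represents a genuine new (relative) 1-homology class whose birth is registered at $e$. In the matrix reduction, this precisely means that $e$'s column retains a nonzero pivot, so by definition $e$ is paired with the simplex indexed by $low_M(e)$.

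The main obstacle is making the translation between the topological statement (``$e$ closes a previously-present path into a new 1-cycle'') and the algebraic statement (``$e$'s descending column does not reduce to zero'') rigorous without entry-by-entry chasing through the $2m \times 2m$ matrix. The cleanest route is to invoke the standard correspondence between pivots of the reduced extended-persistence matrix and births/deaths in the extended module, together with the duality (via coboundaries) that relates the descending block of $M$ to the ascending block. Under this duality, ``lowest edge of a newly closed cycle in descending'' plays the symmetric role of ``highest edge of a newly closed cycle in ascending''; the latter is manifestly paired by the ordinary persistence analysis for graphs, and transferring this through the duality yields exactly the claimed pairing for $e$.
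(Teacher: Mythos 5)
Your reduction of the lemma to ``the descending column of $e$ is negative'' is a sensible start, but the step you use to establish negativity is backwards, and this is a genuine gap. By the dichotomy you yourself recall, a column whose addition gives birth to a new class is precisely the one that reduces to zero; so the assertion that $C=\gamma+e$ ``represents a genuine new 1-homology class whose birth is registered at $e$'' and that this ``precisely means that $e$'s column retains a nonzero pivot'' contradicts your own setup. Indeed, in the \emph{ascending} block the identical configuration (an edge closing an already-present path) makes the edge positive: its column reduces to zero and it is unpaired in ordinary persistence. What makes the descending block behave differently is not that a class is born there but that nothing one-dimensional can be born there: the descending column of $e$ is, in the standard cone/relative formulation, the boundary of the cone over $e$, and once the cones over the edges of $\gamma$ have been processed this boundary is homologous to the absolute cycle $C$, so adding $e$ \emph{kills} the class of $C$ (equivalently, the coned complex of a graph has no nonzero $2$-cycles, so the column cannot reduce to zero, and its pivot lands in the ascending rows). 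Your argument never engages with this structure of the descending block, which is exactly where the content of the lemma lies; the ``no 2-simplex, hence not a boundary'' remark is about absolute homology of the graph and, taken at face value, proves the opposite sign for $e$.

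The duality fallback in your last paragraph does not repair this, because its ``manifest'' base case is false: the highest edge of a newly closed cycle in the ascending filtration is \emph{not} paired by the ordinary persistence analysis of a graph --- it is a positive edge creating an essential class, and it acquires a partner only through the extended (descending) part, which is the very thing being proved. For reference, the paper's own proof takes a different route: it invokes the known characterization that a 1D extended pair consists of the highest and lowest edges of a loop, and then argues by cases on whether an earlier descending column already carries the same low, concluding that $e$ is paired with the highest edge $e_1$ of $C$ or was paired even earlier. If you want to keep your matrix-level route, replace the ``new class is born'' step by the cone/relative-boundary argument above (or show directly that the entry of $e$'s descending column in $e$'s own ascending row can never be cancelled by earlier descending columns, and that once $\gamma$ is present the descending-vertex rows can be cleared), and state explicitly that the resulting low lies in the ascending block, since that is the form in which the lemma is used in the proof of Lemma~\ref{lemma:2}.
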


\begin{proof}
Every column/row of the binary matrix $M$ shown in Algorithm~\ref{alg:MR} corresponds to a simplex (node/edge) in the input graph $G$. For simplicity, we replace the index in $M$ with the simplex it represents in the rest of the paper. For an edge $e$, $low_M(e)$ denotes its lowest row $e_1$, with $M[e, e_1] = 1$. After the matrix reduction process, $e$ and $e_1$ will form an extended persistence pair. It has been shown in~\cite{cohen2009extending, yan2021link} that for a loop, its highest edge and lowest
edge form its extended persistence pair. In other words, $e$ and $e_1$ are the lowest and highest edges of the loop they form. Assume that a loop $C$ has already aroused with $e$ as its lowest edge, then there are two cases for the highest edge $e_1$ in $C$:

\textbf{Case 1.} If there does not exist an edge $e_2$, that appears before $e_1$ in the descending filtration, with $low_M(e_2) = e = low_M(e_1)$, then $e$ will be paired with $e_1$ in Algorithm~\ref{alg:MR}.

\textbf{Case 2.} If there exists an edge $e_2$, that appears before $e_1$ in the descending filtration, with $low_M(e_2) = e = low_M(e_1)$, then $e$ will be paired with $e_2$ or even other edges that appears earlier than $e_2$. Among all possibilities, $e$ is paired before $C$ appears in Algorithm~\ref{alg:MR}.

In other words, $e$ will be paired with $e_1$ or before $e_1$ in Algorithm~\ref{alg:MR}.

\end{proof}

\begin{figure*}[btp]
	\centering
	\scalebox{0.35}{
		\includegraphics[width=\columnwidth]{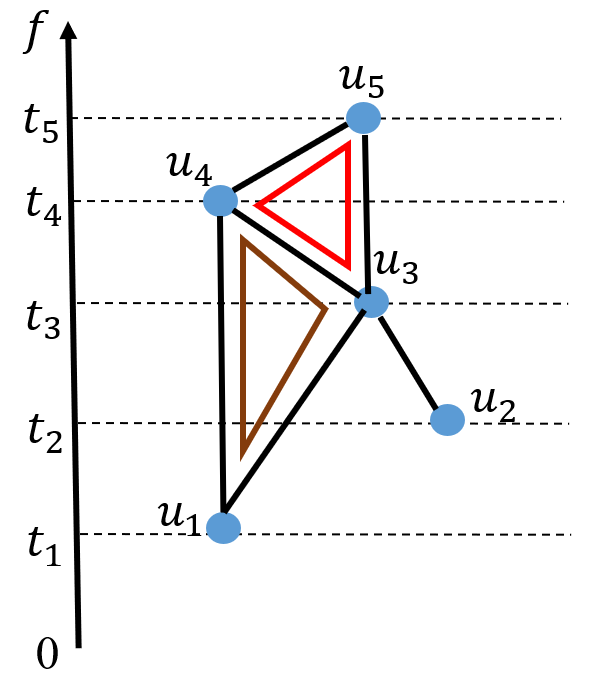}}
	\centering
	\vspace{-.1in}
	\caption{A toy example for Lemma~\ref{lemma:2}.}
	\label{fig:toy}
	\vspace{-.15in}
\end{figure*}

\begin{lemma}
\label{lemma:2}
Algorithm~\ref{alg:MR} finds the ``thinnest pair" for all positive edges.
\end{lemma}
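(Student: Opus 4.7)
The plan is to prove Lemma~\ref{lemma:2} by verifying both clauses of the thinnest-pair definition for every ascending positive edge, using Lemma~\ref{lemma:2_1} as the main algorithmic workhorse and an auxiliary contradiction argument for the minimality clauses.

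First, I would fix an ascending positive edge $e$ and let $(e^*, e)$ be the pair Algorithm~\ref{alg:MR} outputs, where $f(e^*) < f(e)$ since $e$ acts as a creator (highest edge of a cycle) in the ascending part and $e^*$ as a destroyer (lowest edge of a cycle) in the descending part. Applying Lemma~\ref{lemma:2_1} to $e^*$ (the edge being processed in the descending filtration), I obtain a cycle $C$ already present in the current complex having $e^*$ as its lowest edge; the proof of Lemma~\ref{lemma:2_1} further says $e^*$ is paired either with the highest edge $\hat{e}$ of $C$ or with some $\bar{e}$ processed earlier, i.e.~with $f(\bar{e}) \geq f(\hat{e})$. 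Since the algorithm actually outputs $(e^*, e)$, I would argue that $e$ must itself be realizable as the highest edge of some cycle through $e^*$, which immediately yields condition~(1) of the thinnest-pair definition.

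For condition~(2), I would argue by contradiction. Suppose there exists a cycle $C'$ with $e^*$ as its lowest edge and highest edge $e'$ satisfying $f(e') < f(e)$. Because filter values are distinct, $e'$ is processed after $e$ but before $e^*$ in the descending order, so by the time $e^*$ enters the reduced matrix, both $C$ and $C'$ are already present as cycles in the current complex. Re-running the case analysis from Lemma~\ref{lemma:2_1} in the presence of both cycles, the pairing partner of $e^*$ must have filter value at most $f(e')$, contradicting $f(e) > f(e')$. The symmetric clause, stating that any cycle with $e$ as highest edge has lowest edge of value $\leq f(e^*)$, would be handled by an analogous contradiction argument in which $e$ plays the role of the newly created cycle in the ascending filtration, invoking the standard elder-rule behavior of matrix reduction on the ascending portion.

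The main obstacle I anticipate is the contradiction step: cleanly upgrading the qualitative ``paired with $\hat e$ or earlier'' conclusion of Lemma~\ref{lemma:2_1} to the quantitative ``paired with the cycle whose highest edge has smallest filter value among all cycles already present'' statement. This requires carefully tracking how the matrix-reduction column operations interact when several competing cycles through $e^*$ coexist, and is essentially the place where one must invoke (or re-derive for this graph setting) the uniqueness portion of the pairing lemma for persistent homology.
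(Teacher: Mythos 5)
Your overall shape (contradiction via Lemma~\ref{lemma:2_1}) points in the paper's direction, but the step you yourself flag as the ``main obstacle'' is precisely where the argument is missing, and as sketched it would not go through. Observing that both cycles $C$ and $C'$ through $e^*$ are present when $e^*$ is processed buys you nothing from Lemma~\ref{lemma:2_1} beyond the fact that $e^*$ gets paired at that moment: \emph{every} cycle having $e^*$ as its lowest edge is completed exactly when $e^*$ appears in the descending filtration, so ``re-running the case analysis in the presence of both cycles'' cannot by itself force the partner of $e^*$ to have value at most $f(e')$. That quantitative claim is essentially Lemma~\ref{lemma:2} itself, so your condition-(2) step is circular, or at best defers the entire difficulty to an unproved ``uniqueness portion of the pairing lemma''; the same applies to the ``elder-rule'' sentence you offer for the symmetric clause. (Your condition-(1) step is fine: that an extended pair consists of the lowest and highest edges of a common cycle is the known fact the paper also cites rather than proves.)

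The paper closes the gap with a concrete construction absent from your plan, and it shifts attention from $e^*$ to the \emph{partner} edge rather than reasoning about which partner $e^*$ receives. Assuming the algorithm pairs $e^*$ with $e$ while a thinner cycle $C'$ (lowest edge $e^*$, highest edge $e'$ with $f(e')<f(e)$) exists, one concatenates the two competing cycles, discarding their common lowest edge $e^*$, to obtain an auxiliary cycle that does not contain $e^*$, has $e$ as its highest edge, and all of whose edges have filter value strictly greater than $f(e^*)$, hence appear strictly before $e^*$ in the descending filtration. Lemma~\ref{lemma:2_1} applied to this auxiliary cycle shows that $e$ is already paired before the column of $e^*$ is processed by Algorithm~\ref{alg:MR}, contradicting the assumption that it is paired with $e^*$. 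This symmetric-difference-of-cycles construction is the key idea your proposal lacks; without it (or an equivalent re-derivation of pairing uniqueness for the matrix reduction), the contradiction you want does not materialize.
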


\begin{proof}
During the descending filtration, when an edge $e = u_i u_j$ appears, there are two cases:

\textbf{Case 1.} $e$ is a negative edge, and will kill a connected component. This will not influence the 1D extended persistence pair.

\textbf{Case 2.} $e$ is a positive edge, and will be paired with an ascending positive edge $e_1 = u_a u_b$. Assume that $e_1$ does not construct the ``thinnest pair" with $e$, then there exists an edge $e_2 = u_c u_d$, that forms the ``thinnest pair" with $e$. We can observe a loop $C = u_a \rightarrow u_c u_d \rightarrow u_b$, in which all edges born earlier than $e$ in the descending filtration, and $e_1$ is the latest edge in the ascending filtration. A toy example is shown in Figure~\ref{fig:toy}, where $e = u_1u_3$, $e_1 = u_4u_5$, $e_2 = u_3u_4$, and $C$ is the red loop. Then according to Lemma~\ref{lemma:2_1}, $e_1$ will be paired no later than $C$ appears. In other words, it has already been paired before $e$ appears. Therefore, the assumption is wrong, and Algorithm~\ref{alg:MR} will find the ``thinnest pair" for all positive edges. 

\end{proof}

According to Lemma~\ref{lemma:1} and Lemma~\ref{lemma:2}, Algorithm~\ref{alg:ext_PD1} will produce the same 1D extended persistence pair as Algorithm~\ref{alg:MR}. Therefore, they output the same 1D EPD.

\subsection{Union-Find Algorithm}
In this section, we rewrite the well-known Union-Find algorithm~\cite{cormen2009introduction} in a sequential format. The algorithm is listed in Algorithm~\ref{alg:UF}. Therefore we can use the proposed framework to estimate $PD_0$.
\begin{center}
	\scalebox{1.0}{
		\begin{minipage}{1.0\linewidth}
		\begin{algorithm}[H]
		\begin{multicols}{2}
		\begin{algorithmic}[1]
		\STATE {\bfseries Input:} $G$ = ($V$, $E$), $f$
		\STATE $PD_0 = \{\}$
		\FOR{$v \in V$}
		\STATE $v.value = f(v)$, $v.root = v$ 				
		\ENDFOR
		\STATE $Q = \text{Sort}(V)$
		\WHILE{Q is not empty}
		\STATE $u = Q.\text{pop-min}()$
		\FOR{$v \in G.\text{neighbors}(u)$}
		\STATE $pu, pv = \text{Find-Root}(u), \text{Find-Root}(v)$
		\IF{$pu \neq pv$}
	    \STATE $s/l = argmin/argmax(pu.value, pv.value)$
		\STATE $l.root = s$
		\STATE $PD_0 + \{(l.value, u.value)\}$
        \ENDIF
		\ENDFOR
		\ENDWHILE
		\STATE {\bfseries Function:} $Find-Root (u)$
		\STATE $pu = u$				
		\WHILE {$pu \neq pu.root$}
		\STATE $pu.root = (pu.root).root$, $pu = pu.root$
		\ENDWHILE
		\STATE {\bfseries Return:} $pu$
		\end{algorithmic}
		\caption{Union-Find (Sequential)}
		\label{alg:UF}
	\end{multicols}
	\vspace{-.15in}
	\end{algorithm}
	\end{minipage}
	}
\end{center}

\subsection{Experimental details}

\subsubsection{Datasets.} 

In this paper, we exploit real-world datasets including:

\begin{enumerate}
	\item Citation networks: Cora, Citeseer, and PubMed~\cite{sen2008collective} are standard citation networks where nodes denote scientific documents and edges denote citation links.
	\item Amazon shopping records: In Photo and Computers~\cite{shchur2018pitfalls}, nodes represent goods, edges represent that two goods are frequently brought together, and the node features are bag-of-words vectors.
	\item Coauthor datasets: In CS and Physics~\cite{shchur2018pitfalls}, nodes denote authors and edges denote that the two authors co-author a paper. 
\end{enumerate}

The detailed statistics are available in Table~\ref{tab:NC_stat}.

\begin{table*}
	\vspace{-0.1 in}
	\centering
	\caption{Statistics of the node classification datasets}
	\label{tab:NC_stat} 
	\scalebox{1.0}{
		\begin{tabular}{lccccc}
			\hline\noalign{\smallskip}
			Dataset & Classes & Nodes & Edges & Features & Avg degree\\
			\noalign{\smallskip}\hline\noalign{\smallskip}
			Cora & 7 & 2708 & 5429 & 1433 & 2.00\\
			Citeseer &  6 & 3327 & 4732 & 3703 & 1.42 \\
			PubMed & 3 & 19717 & 44338 & 500 & 2.25 \\
			CS& 15 & 18333 & 100227 & 6805 & 5.47\\
			Physics & 5 & 34493 & 282455 & 8415 & 8.19\\
			Computers & 10 & 13381 & 259159 & 767 & 19.37\\
			Photo & 8 & 7487 & 126530 & 745 & 16.90\\
			\noalign{\smallskip}
			\hline
			\noalign{\smallskip}
	\end{tabular}}
	\vspace{-0.1 in}
\end{table*}

\subsubsection{Experimental Details}

In this section, we mainly present the experimental settings on neural estimation, as for the setting in downstream graph representation learning tasks, we are consistent with~\cite{zhao2020persistence, yan2021link}.

Following the settings in~\cite{zhao2020persistence, yan2021link}, we extract 2-hop neighborhoods of all the nodes in Cora, Citeseer, PubMed and 1-hop neighborhoods of all the nodes in Photo, Computers, Physics, and CS. In the training process, we only adopt the $W_2$ distance between the predicted diagram and the ground truth diagram as the loss function, while the PIE between the predicted persistence image and the ground truth persistence image only serves as an evaluation metric. 

We adopt Adam as the optimizer with the learning rate set to 0.002 and weight decay set to 0.01. We build a 4-layer GNN framework with dropout set to 0. In the training process, we set the batch size to 10, and the training epoch to 20. In this paper, we also exploit a 2-layer MLP to transform the node embedding obtained by the GNN to the persistence points on edges. In the framework, PRELU is adopted as the activation function, the dimension of hidden layers is set to 32, and the dimension of the output persistence image is 25. All the experiments are implemented with two Intel Xeon Gold 5128 processors,192GB RAM, and 10 NVIDIA 2080TI graphics cards. 

Notice that in the normal computation of Wasserstein distance between PDs, the persistence points can be paired to the diagonal or the persistence points in the other diagram. However, in the experiments, we observe that with this loss function as the supervision, the model may converge to local minima, e.g., all the predicted persistence points are paired to diagonal. Therefore, the predicted points all converge to the diagonal and contain no topological information. To avoid such situations, we force the predicted points to pair with the persistence points in the ground truth diagram rather than the diagonal in the training stage. In the reference stage, we report the normal $W_2$ distance between persistence diagrams, that is, to let the predicted points pair with the diagonal.

\subsubsection{About the assets we used}
Our model is experimented on benchmarks from~\cite{Morris+2020, hu2020open, dwivedi2020benchmarking, sen2008collective, shchur2018pitfalls} provided under MIT license.

\subsection{Additional Experiments}

\subsubsection{Experiments on transferability}
\label{subsub:1}
In this section, we design experiments to evaluate the transferability of PDGNN in terms of different graph structures. Our aim is to evaluate whether the pre-trained model can estimate EPDs on totally unseen graphs. Therefore, we evaluate the models pre-trained on Photo on other datasets, and report the $W_2$ distance between the predicted diagrams and ground truth EPDs. Notice that we only use Ollivier-Ricci curvature~\cite{ni2018network} as the filter function. The results are shown in Table~\ref{tab:trans}. 

\begin{table}
	\centering
	\caption{Transferability in terms of different graph structures ($W_2$ distance.)}
	\label{tab:trans} 
	\scalebox{1.0}{
		\begin{tabular}{lccccc}
			\hline\noalign{\smallskip}
			Pre-train & Cora & Citeseer & PubMed & Photo & Computers\\
			\noalign{\smallskip}\hline\noalign{\smallskip}
			Pre-train & 0.392 & 0.279& 0.444 & 0.379& 0.404\\
			Fine-tune & 0.348 & 0.259 & 0.360& 0.380 & 0.381\\
			\noalign{\smallskip}
			\hline
			Standard& 0.354&0.267 &0.344 & 0.379& 0.377\\
			\noalign{\smallskip}
			\hline
			\noalign{\smallskip}
	\end{tabular}}
\end{table}

In Table~\ref{tab:trans}, ``Pre-train" is to directly predict the EPDs with the pre-trained model, and ``Fine-tune" is to fine-tune an epoch on the new datasets, and then predict the EPDs. As shown in Table~\ref{tab:trans}, directly predicting the EPDs with the pre-trained model perform comparably with the standard settings among datasets. We also observe that with only a one-epoch fine-tuning, the pre-trained model can achieve almost an equal performance compared with the standard setting. It justifies the fine transferability of PDGNN. Therefore, in a totally new environment, instead of training the uninitialized models for many epochs, we can simply fine-tune or even directly use the pre-trained model to estimate EPDs on new graph structures.

\subsubsection{Evaluation on the influence of training samples}
\label{subsub:2}
In this section, we evaluate the influence of training samples on PDGNN. We aim to show that the model can reach an acceptable performance with only a small number of training samples.

Recall that for a given graph, we extract the  $k$-hop neighborhoods of all the nodes and randomly select 80\% of these vicinity graphs to train PDGNN. For a thorough evaluation, we train PDGNN with 5/10/20/40\% vicinity graphs in this experiment and report the $W_2$ distance of persistence diagrams, the PIE of persistence images, and the node classification accuracy (NCA) in Table~\ref{tab:n_sample}. We also visualize the influence in Figure~\ref{fig:n_sample} and Figure~\ref{fig:hard}.

\begin{table}
	\vspace{-.15in}
	\centering
	\caption{Influence of training samples on PDGNN}
	\label{tab:n_sample} 
	\scalebox{0.9}{
		\begin{tabular}{|l|ccc|ccc|ccc|}
			\hline\noalign{\smallskip}
			Dataset & \multicolumn{3}{c|}{Cora} &\multicolumn{3}{c|}{Citeseer} & \multicolumn{3}{c|}{PubMed} \\
			Proportion & $W_2$ & PIE & NCA & $W_2$ & PIE & NCA & $W_2$ & PIE & NCA \\
			\noalign{\smallskip}\hline\noalign{\smallskip}
			5\% & 0.391 & 2.51e-3 & 81.3$\pm$0.6 & 0.273 & 3.12e-3 & 70.0$\pm$0.7&0.330 & 4.35e-3 & 78.0$\pm$0.4\\
			10\% & 0.358 & 1.88e-3 & 81.6$\pm$0.7 &0.231 & 3.01e-3 & 70.5$\pm$0.5 &0.300 & 2.36e-3 & 78.5$\pm$0.4 \\
			20\% & 0.318 & 6.99e-4 & 81.8$\pm$0.8 & 0.227 & 1.63e-3 & 70.6$\pm$0.5 & 0.278 & 1.03e-3 & 78.3$\pm$0.3 \\
			40\% & 0.286 & 9.79e-4& 81.6$\pm$0.6& 0.208 & 9.98e-4 & 70.9$\pm$0.6 & 0.255 & 1.34e-3 & 78.8$\pm$0.5 \\
			80\% & 0.241 & 4.75e-4 & 82.0$\pm$0.5 & 0.183 & 4.43e-4 & 70.8$\pm$0.5& 0.256 & 8.95e-4 & 78.7$\pm$0.6 \\
			\noalign{\smallskip}
			\hline
			\noalign{\smallskip}
	\end{tabular}}
\end{table}

\begin{figure*}[btp]
	\centering
	\scalebox{0.8}{
		\includegraphics[width=\columnwidth]{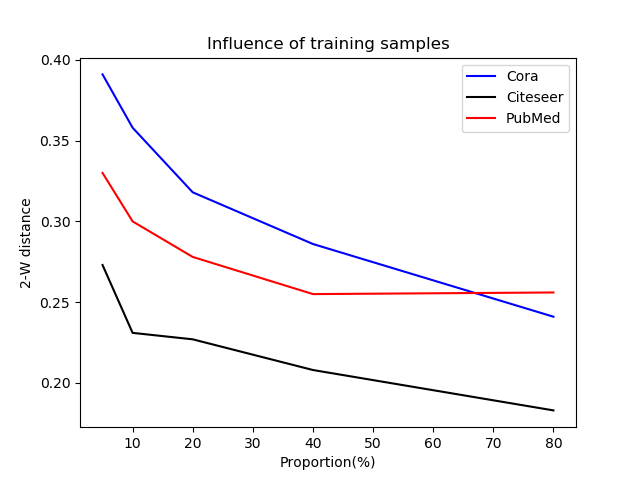}}
	\centering
	\vspace{-.1in}
	\caption{Influence of training samples.}
	\label{fig:n_sample}
	\vspace{-.15in}
\end{figure*}

\begin{figure*}[btp]
	\centering
	\subfigure[]{
		\begin{minipage}[t]{0.45\linewidth}
			\centering
			\includegraphics[width=\columnwidth]{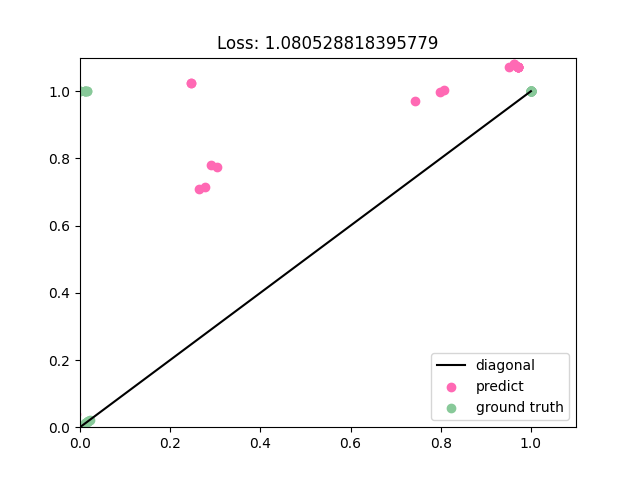}
		\end{minipage}
	}%
	\subfigure[]{
		\begin{minipage}[t]{0.45\linewidth}
			\centering
			\includegraphics[width=\columnwidth]{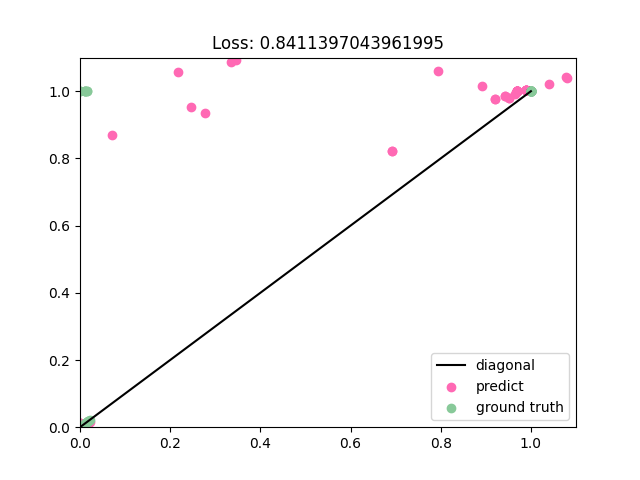}
		\end{minipage}%
	}%
	\\
	\subfigure[]{
		\begin{minipage}[t]{0.45\linewidth}
			\centering
			\includegraphics[width=\columnwidth]{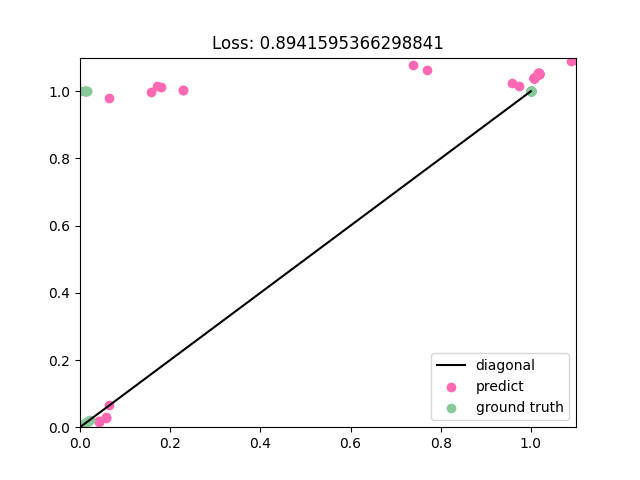}
		\end{minipage}%
	}%
	\subfigure[]{
		\begin{minipage}[t]{0.45\linewidth}
			\centering
			\includegraphics[width=\columnwidth]{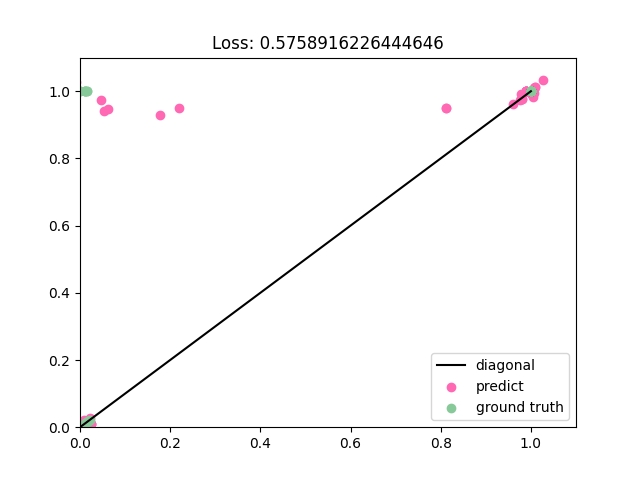}
		\end{minipage}
	}%
	\centering
	\vspace{-.1in}
	\caption{Visualization on the influence of training samples. We select a vicinity graph in Cora with Ollivier-Ricci curvature as the filter function, and plot the influence of training samples on the $W_2$ distance (loss) of EPDs. (a), (b), and (c) denote the prediction of PDGNN with 5/10/20\% training samples, (d) denotes the prediction of PDGNN with the standard setting.}
	\label{fig:hard}
	\vspace{-.15in}
\end{figure*}

As shown in Figure~\ref{fig:n_sample}, the training error tends to converge as the training samples gradually increase. Considering that the $W_2$ distance and PIE cannot directly reflect the learning power as NCA does, we select a vicinity graph in Cora which is hard for PDGNN to learn and visualize in Figure~\ref{fig:hard}. As shown in the figure, as the number of training samples increases, we find that PDGNN can gradually capture the ground truth persistence points in the up y-axis and the up-right diagonal with much less noise. The number of training samples may help the model learn the hard samples better.

We also observe that in Table~\ref{tab:n_sample}, PDGNN reaches a comparable performance on NCA with much fewer training samples. The observation shows that a little perturbation on the persistence image will not influence its structural information very much. 

Combining the observation in Section~\ref{subsub:1} and Section~\ref{subsub:2}, we can safely conclude that our model can be easily generalized to other frameworks. PDGNN does not need many training samples to reach an acceptable performance, and it can be easily transferred to totally unseen graphs.

\subsubsection{Experiments on graph classification datasets.}

In the experiment part, we only consider predicting EPDs of the $k$-hop neighborhoods of the original graphs. Even if these vicinity graphs can be large and dense, there can be structural differences between these vicinity graphs and other real-world graphs. In this section, we do further experiments on graph classification datasets, in which we approximate the EPDs of the real-world graphs rather than the vicinity graphs. We exploit various datasets from the TU Dortmund University~\cite{Morris+2020}, benchmarking-GNN~\cite{dwivedi2020benchmarking}, and OGB~\cite{hu2020open}. The detailed information of these datasets and the approximation error are all available in Table~\ref{tab:GC}.

\begin{table*}
	\centering
	\caption{Statistics and approximation error on the graph classification datasets}
	\label{tab:GC} 
	\scalebox{1.0}{
		\begin{tabular}{lccccc}
			\hline\noalign{\smallskip}
			Dataset & Graphs & Avg Nodes & Avg Edges & $W_2$ & PIE\\
			\noalign{\smallskip}\hline\noalign{\smallskip}
			MUTAG & 188 & 17.9 & 39.6 & 0.300 & 3.06e-4\\
			ENZYMES &  600 & 32.6 & 124.3 & 0.299 & 3.72e-3 \\
			PROTEINS & 1113 & 39.1 & 145.6 & 0.194 & 8.30e-4 \\
			COLLAB & 5000 & 74.5 & 4914.4 &  0.346 & 3.25e-2\\
			IMDB-BINARY & 1000 & 19.8 & 193.1 & 0.176 & 4.13e-4\\
			REDDIT-BINARY & 2000 & 429.6 & 995.5 & 0.383 & 1.92e-4\\
			ZINC (subset) & 12000 & 23.2 & 49.8 & 0.089 & 1.52e-5\\
			OGBG-MolHIV & 41127 & 25.5 & 27.5 & 0.104 & 4.96e-5\\
			\noalign{\smallskip}
			\hline
			\noalign{\smallskip}
	\end{tabular}}
\end{table*}

Notice that we do not add Ollivier-Ricci curvature as the filter function here, because computing the filter function on all the graphs will bring too much computational cost. Comparing the results from Table~\ref{tab:GC} and the results on vicinity graphs, we observe that the performance on graph classification datasets is slightly worse than the performance on vicinity graphs. This may be due to the fact that in graph classification datasets, the training samples can be very small, e.g., there are only 188 graphs in MUTAG, therefore the training is under-fit. On the contrary, the satisfying approximation quality on OGBG-MolHIV and ZINC can be due to their large number of training samples. 

To evaluate the results more clearly, we also visualize some selected examples in Figure~\ref{fig:GC}. As shown in the figure, in most situations, PDGNN can well estimate the EPDs on these graphs, and the $W_2$ distance around 0.3 is generally an acceptable result.

\begin{figure*}[btp]
	\centering
	\subfigure[]{
		\begin{minipage}[t]{0.45\linewidth}
			\centering
			\includegraphics[width=\columnwidth]{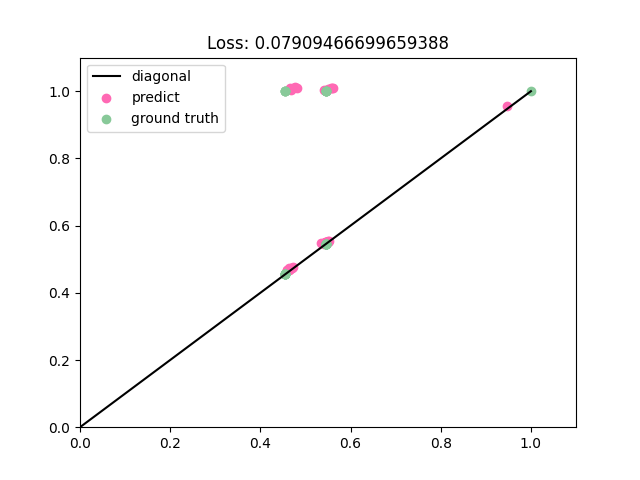}
		\end{minipage}
	}%
	\subfigure[]{
		\begin{minipage}[t]{0.45\linewidth}
			\centering
			\includegraphics[width=\columnwidth]{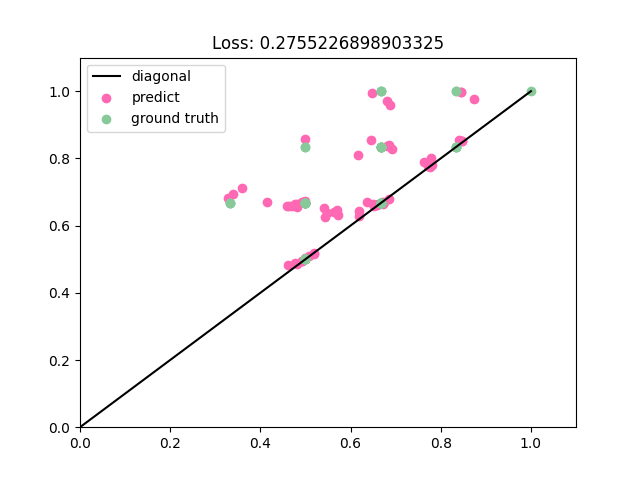}
		\end{minipage}%
	}%
	\\
	\subfigure[]{
		\begin{minipage}[t]{0.45\linewidth}
			\centering
			\includegraphics[width=\columnwidth]{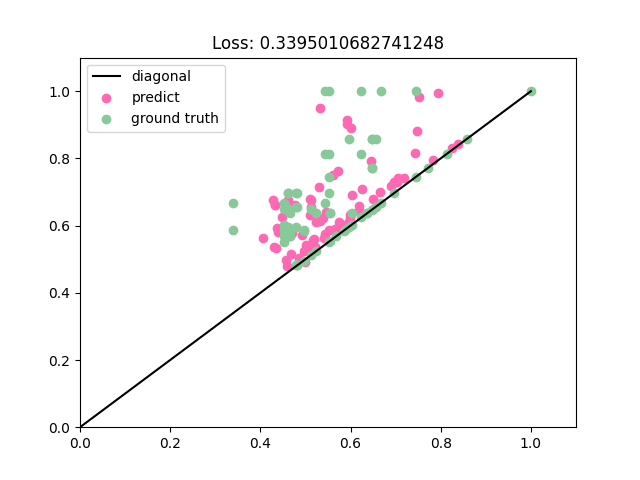}
		\end{minipage}%
	}%
	\subfigure[]{
		\begin{minipage}[t]{0.45\linewidth}
			\centering
			\includegraphics[width=\columnwidth]{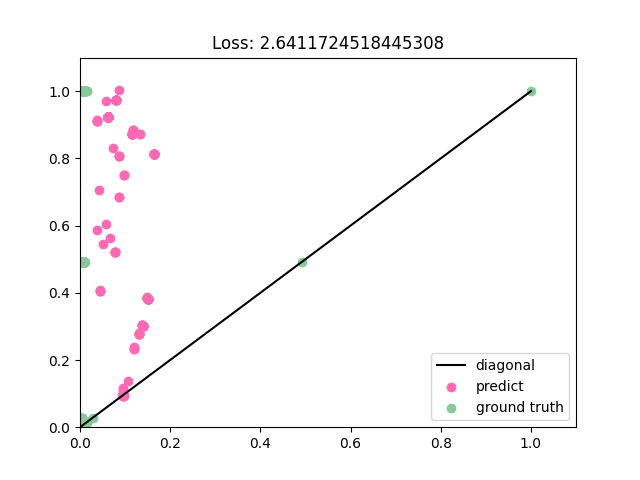}
		\end{minipage}
	}%
	\centering
	\vspace{-.1in}
	\caption{Visualization of graph classification samples. We select samples from IMDB-BINARY, PROTEINS, ENZYMES, and REDDIT-BINARY, respectively, and report the $W_2$ distance (loss).}
	\label{fig:GC}
	\vspace{-.15in}
\end{figure*}

\subsubsection{Why not directly approximating PIs}

We believe directly estimated PIs will lose important structural information that can be crucial for downstream tasks. PI is only an approximation of the persistence diagram. The L2 distance between PIs does not accurately reflect the true Wasserstein distance between diagrams. Therefore, using an L2-distance-based loss to directly learn the PI may lead to the loss of important structural information carried by a diagram. An example is provided in Figure~\ref{fig:direct}. For a sample vicinity graph from Cora, we compare the ground truth PI (computed from the ground truth diagram), the PI computed from the diagram estimated by our method PDGNN, and the PI directly estimated by GIN.  Both estimated PIs have similar L2 distances from the ground truth PI. But we observe that the PI estimated by PDGNN has a very similar spatial distribution to the ground truth PI. This structural property, however, is not preserved by the directly estimated PI. Such loss of structural information of directly estimated PIs, although not captured by the L2 error, partially explains their worse representation power. In Table~\ref{tab:NC} and Table~\ref{tab:LP} in the main paper, the directly estimated PIs (PEGN(GIN\_PI) and TLC-GNN(GIN\_PI)) perform worse in the downstream task. 

This choice of estimating diagrams instead of PIs is a part of the overarching theme of our paper. Note that the main contribution of our paper is to transfer a complicated and uncontrollable learning process to a controllable process with algorithmic insight. This general principle also applies to our learning algorithm. We decompose the diagram computation algorithm into sub-algorithms, which can be approximated well by a GNN. This is the reason PDGNN approximates the diagrams much better than other baselines in Table~\ref{tab:app_nc} in the main paper.

\begin{figure*}[btp]
	\centering
	\subfigure[]{
		\begin{minipage}[t]{0.33\linewidth}
			\centering
			\includegraphics[width=\columnwidth]{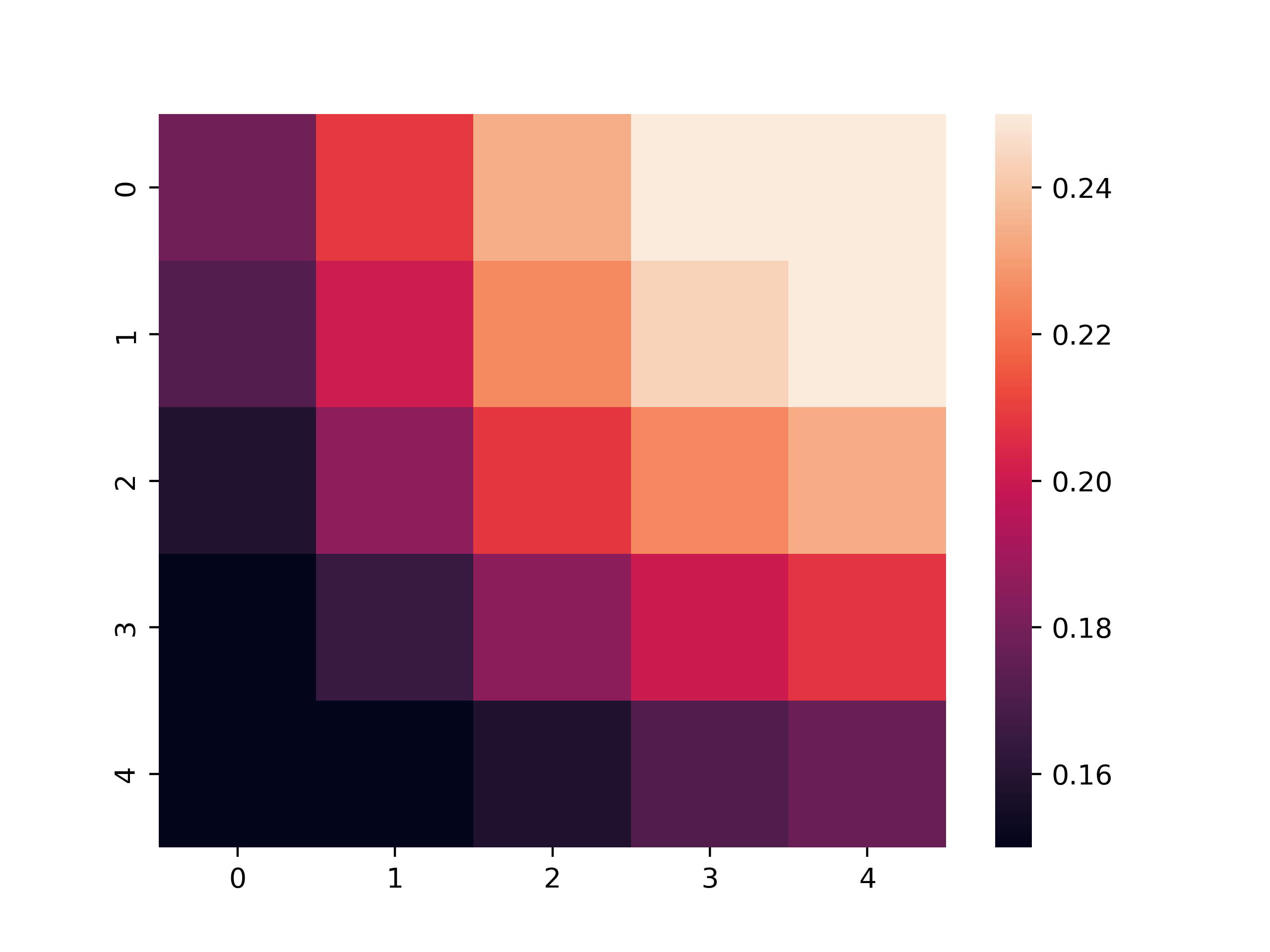}
		\end{minipage}
	}%
	\subfigure[]{
		\begin{minipage}[t]{0.33\linewidth}
			\centering
			\includegraphics[width=\columnwidth]{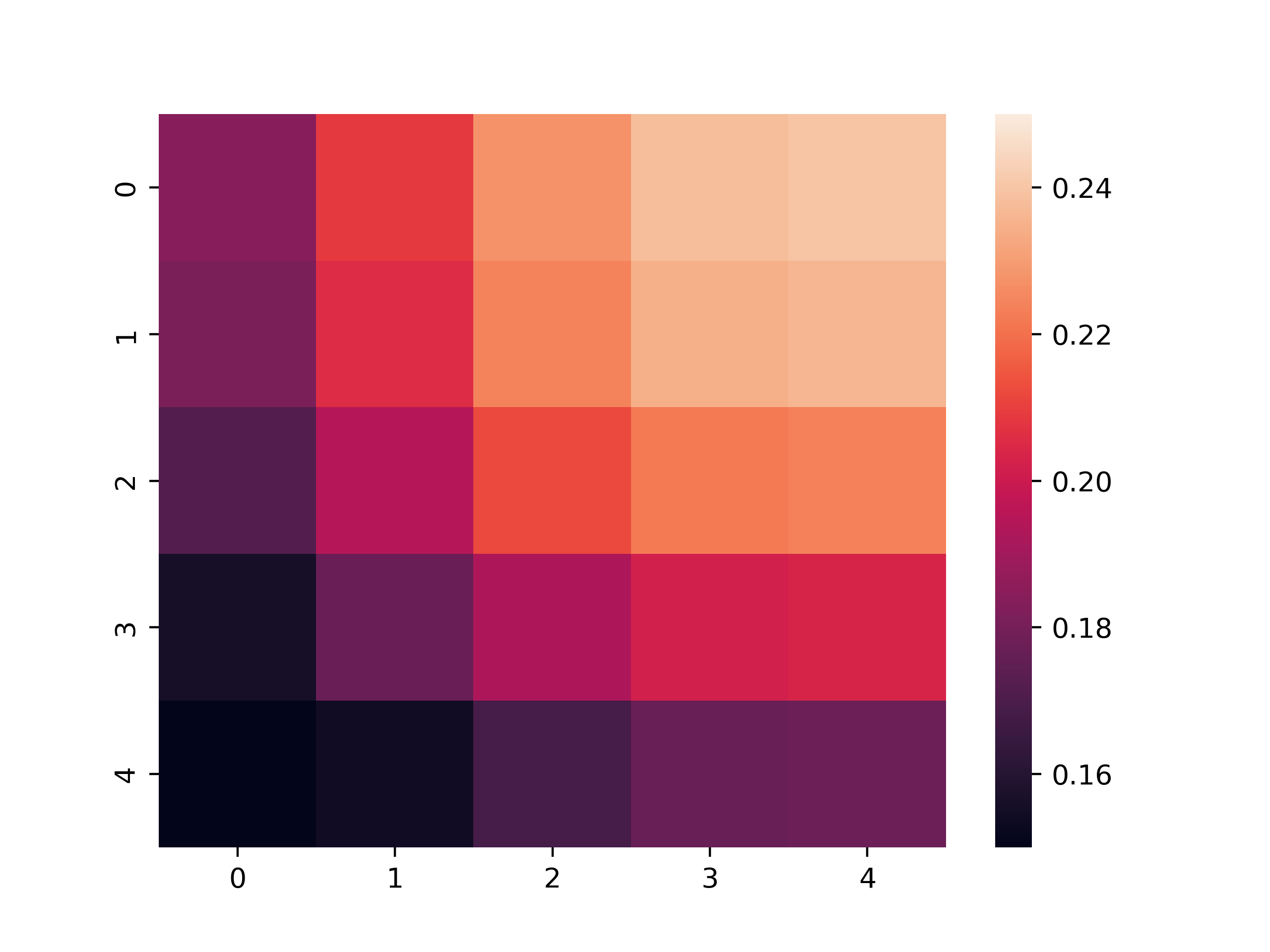}
		\end{minipage}%
	}%
	\subfigure[]{
		\begin{minipage}[t]{0.33\linewidth}
			\centering
			\includegraphics[width=\columnwidth]{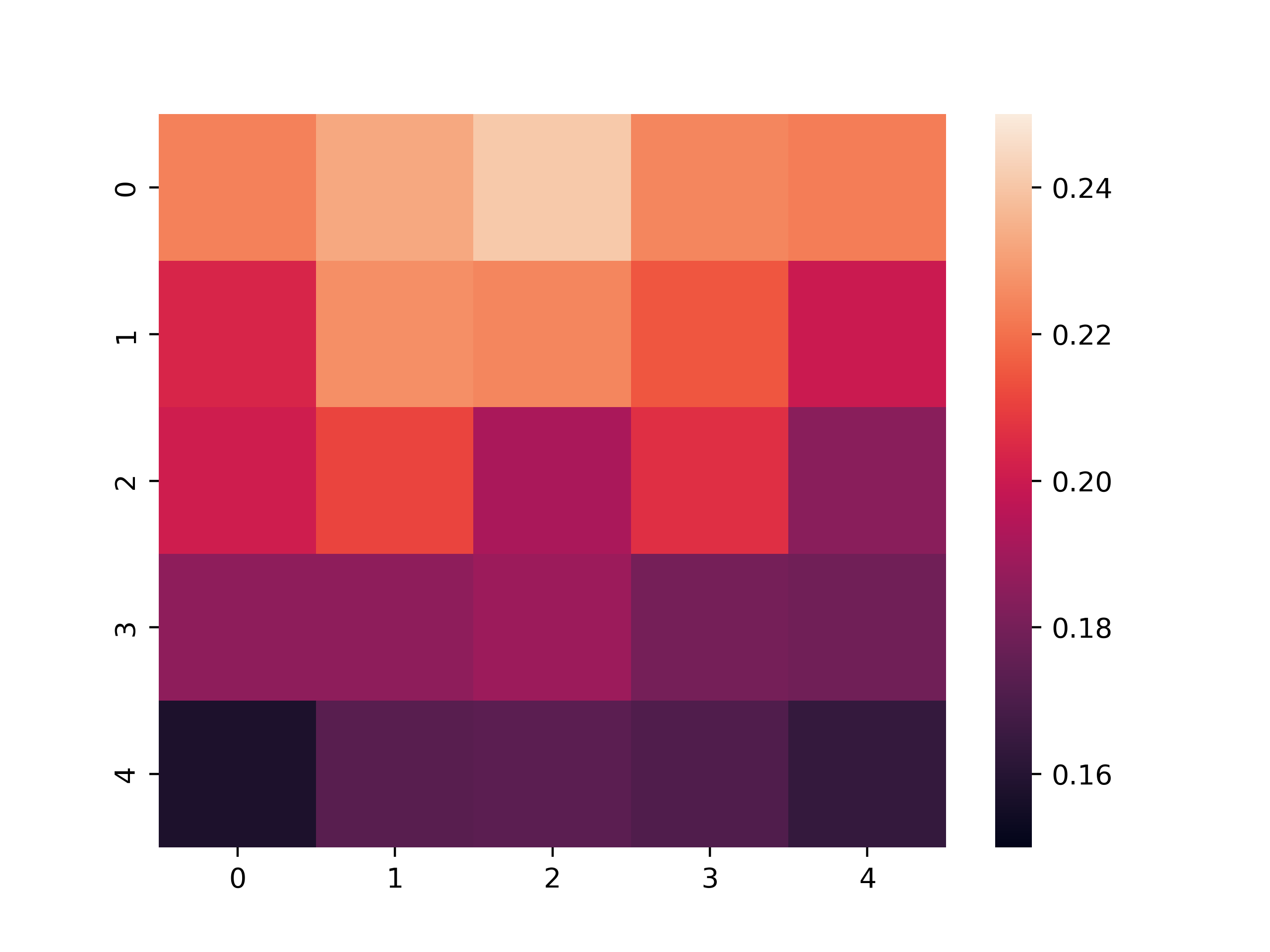}
		\end{minipage}%
	}%
	\centering
	\vspace{-.1in}
	\caption{Examples to explain why not directly approximating PIs.}
	\label{fig:direct}
	\vspace{-.05in}
\end{figure*}

\subsubsection{Experiments on large and sparse datasets.}

\begin{table*}
	\vspace{-0.05 in}
	\centering
	\caption{Experiments on large and sparse datasets.}
	\label{tab:ls} 
	\scalebox{1.2}{
		\begin{tabular}{lccc}
			\hline\noalign{\smallskip}
			Dataset & Cora & Citeseer & PubMed \\
			\hline\noalign{\smallskip}
			Node & 2485 & 2120 & 19717\\	
		    Edge & 5069 & 3679 & 44324\\	\noalign{\smallskip}\hline\noalign{\smallskip}
			Fast~\cite{yan2021link} & 0.184 & 0.068 & 1.816\\
			
			Gudhi~\cite{gudhi:urm} & 0.045 & 0.023 & 1.696 \\
			PDGNN & \textbf{0.006} & \textbf{0.005} & \textbf{0.007} \\
			\noalign{\smallskip}
			\hline
			\noalign{\smallskip}
	\end{tabular}}
	\vspace{-0.05 in}
\end{table*}

In the experiments in the main paper, the input is the $k$-hop vicinity graphs. On citation graphs, the vicinity graph remains small. On these small graphs, the exact sequential algorithm like Gudhi has less overhead, and thus is unsurprisingly faster. 

Indeed, on large and sparse graphs, our method outperforms strong baselines like Gudhi significantly. In Table~\ref{tab:ls}, we compare the running time (in seconds) on popular citation networks including Cora, Citeceer, and PubMed. For each graph, we run experiments on the largest connected subgraph. We also report the number of nodes/edges of the selected subgraph.

\subsubsection{Experiment on the choice of filter functions/other graph metrics.}

\begin{table*}
	\vspace{-0.05 in}
	\centering
	\caption{Experiments on the choice of filter functions.}
	\label{tab:filter} 
	\scalebox{1.1}{
		\begin{tabular}{|l|cc|cc|cc|}
			\hline\noalign{\smallskip}
			Dataset & \multicolumn{2}{c|}{Cora} & \multicolumn{2}{c|}{Citeseer} & \multicolumn{2}{c|}{PubMed} \\
			\hline\noalign{\smallskip}
			Filter & clustering & centrality & clustering & centrality & clustering & centrality\\		\noalign{\smallskip}\hline\noalign{\smallskip}
			\multicolumn{7}{|c|}{Evaluation on approximation error} \\
			\noalign{\smallskip}\hline\noalign{\smallskip}
			$W_2$ & 0.392 & 0.332 & 0.178 & 0.237 & 0.267 & 0.322 \\
			PIE & 1.53e-3 & 4.14e-4 & 7.17e-4 & 4.65e-4 & 2.5e-3 & 3.75e-4\\
			\noalign{\smallskip}\hline\noalign{\smallskip}
			\multicolumn{7}{|c|}{Evaluation on Time (s)} \\
			\noalign{\smallskip}\hline\noalign{\smallskip}
			Fast~\cite{yan2021link} & 2.10 & 2.21 & 1.16 & 1.31 & 38.07 & 39.05\\
			
			Gudhi~\cite{gudhi:urm} & 0.98 & 1.00 & 0.59 & 0.63 & 16.79 & 16.24 \\
			PDGNN & 11.25 & 11.30 & 13.61 & 13.62 & 66.19 & 67.29 \\
			\noalign{\smallskip}
			\hline
			\noalign{\smallskip}
	\end{tabular}}
	\vspace{-0.05 in}
\end{table*}

In Table~\ref{tab:filter}, we set degree centrality and clustering coefficient as the filter function, follow the settings in Table~\ref{tab:app_nc} and report the approximation error on Cora, Citeseer, and PubMed. We also report computation time following the setting in Table~\ref{tab:time}. The only difference is that below we report the time to generate all vicinity graphs (rather than 1000 graphs as in Table~\ref{tab:time}).

We observe that (1) the filter function only has a minor influence on inference/computation speed, for both the sequential algorithm and ours; (2) the filter function does influence the approximation error. The reason is that different filter functions have different ranges; functions with larger ranges tend to have larger approximation errors, especially on PIE. This is another evidence that the distance function on PIs is not very robust for learning. 

\subsubsection{Experiments on the threshold value of average node/edge to decide which method is the fastest to compute/estimate EPDs.}

\begin{table*}
	\vspace{-0.05 in}
	\centering
	\caption{Experiments on the threshold value.}
	\label{tab:thres1} 
	\scalebox{0.85}{
		\begin{tabular}{lccccccccccc}
			\hline\noalign{\smallskip}
			Node & 80 & 84 &88 & 92 & 96 & 100 & 104 & 108 & 112 & 116 & 120 \\
Edge & 515 & 585 & 660 & 713 & 759 & 820 & 943 & 1012 & 1060 & 1152 & 1231\\
\hline\noalign{\smallskip}
Fast~\cite{yan2021link} & 6.8e-3 & 8.0e-3 & 8.9e-3 & 9.6e-3 & 1.0e-2 & 1.1e-2 & 1.3e-2 & 1.4e-2 & 1.4e-2 & 1.6e-2 & 1.7e-2 \\
Gudhi~\cite{gudhi:urm} & \textbf{2.5e-3} & \textbf{3.2e-3} & \textbf{3.6e-3} & \textbf{3.9e-3} & \textbf{4.0e-3} & 4.8e-3 & 5.5e-3 & 6.0e-3 & 6.4e-3 & 6.6e-3 & 6.8e-3 \\
PDGNN & 4.5e-3 & 4.5e-3 & 4.6e-3 & 4.6e-3 & 4.6e-3 & \textbf{4.6e-3} & \textbf{4.7e-3} & \textbf{4.7e-3} & \textbf{4.7e-3} & \textbf{4.7e-3} & \textbf{4.8e-3} \\

			\noalign{\smallskip}
			\hline
			\noalign{\smallskip}
	\end{tabular}}
	\vspace{-0.05 in}
\end{table*}

To find the threshold, we use the well-known Stochastic Block Model (SBM)~\cite{holland1983stochastic} to generate synthetic graphs. We set the number of nodes in these synthetic graphs from 200 to 300, with 10 as the step. In these graphs, we randomly generate 5 different clusters, and set the probability of edges intra-cluster to 0.4, and the probability of edges inter-cluster to 0.1. In this way, we can obtain 11 graphs with different nodes and edges. We set node degree as the filter function, and add experiments on the largest connected components of these 11 graphs. The information of the selected connected graphs and the running time (second) are listed in Table~\ref{tab:thres1}. As shown in the Table, the threshold is around 100 nodes / 820 edges.

\begin{table*}
	\vspace{-0.05 in}
	\centering
	\caption{Experiments on the threshold value.}
	\label{tab:thres2} 
	\scalebox{0.85}{
		\begin{tabular}{lccccccccccc}
			\hline\noalign{\smallskip}
			Node & 100 & 100 & 100 & 100 & 100 & 100 & 100 & 100 & 100 & 100 & 100\\
Edge & 489 & 529 & 595 & 652 &766 & 842 & 968 & 1011 & 1082 & 1231 & 1307 \\
\hline\noalign{\smallskip}
Fast~\cite{yan2021link} & 7.0e-3 & 7.4e-3 & 8.3e-3 & 9.0e-3 & 1.1e-2 & 1.2e-2 & 1.3e-2 & 1.4e-2 & 1.4e-2 & 1.6e-2 & 1.7e-2 \\
Gudhi~\cite{gudhi:urm} & \textbf{2.8e-3} & \textbf{2.9e-3} & \textbf{3.0e-3} & \textbf{4.1e-3} & \textbf{4.2e-3} & 5.1e-3 & 5.5e-3 & 5.9e-3 & 6.2e-3 & 6.3e-3 & 6.7e-3 \\
PDGNN & 4.1e-3 & 4.1e-3 & 4.2e-3 & 4.2e-3 & 4.3e-3 & \textbf{4.4e-3} & \textbf{4.7e-3} & \textbf{4.7e-3} & \textbf{4.8e-3} & \textbf{4.8e-3} & \textbf{4.8e-3}\\

			\noalign{\smallskip}
			\hline
			\noalign{\smallskip}
	\end{tabular}}
	\vspace{-0.05 in}
\end{table*}

We also evaluate the influence of density. We fix the node number of the SBM model to 250, and set the probability of edges intra-cluster from 0.5 to 0.7, and the probability of edges inter-cluster from 0.05 to 0.15. The steps for intra-cluster and inter-cluster are 0.02 and 0.01, respectively. In this way, we can obtain 11 graphs with the same nodes and different edges. We set node degree as the filter function, and add experiments on the largest connected components of these 11 graphs. The information of the selected connected graphs and the running time (second) are also listed in Table~\ref{tab:thres2}. As shown in the Table, the threshold is around 100 nodes / 766 edges.

\subsubsection{Limitation of the paper.}
First, in certain cases like Figure~\ref{fig:GC} (d), the model only captures a tendency of the EPD. This can be because that the distribution of the EPD of the selected graph is seldom in the training samples. Therefore, it is hard for the model to estimate these EPDs correctly.

Second, topological features are just one side of the data. In many cases, only using topological features such as EPDs to represent the information of graphs is not enough. A better way is to introduce other information such as the semantic information of graphs as complementary.

\end{document}